
\typeout{IJCAI--24 Instructions for Authors}


\documentclass{article}
\pdfpagewidth=8.5in
\pdfpageheight=11in

\usepackage{ijcai24}

\usepackage{times}
\usepackage{soul}
\usepackage{url}
\usepackage[hidelinks]{hyperref}
\usepackage[utf8]{inputenc}
\usepackage[small]{caption}
\usepackage{graphicx}
\usepackage{amsmath}
\usepackage{amsthm}
\usepackage{booktabs}
\usepackage{algorithm}
\usepackage{algorithmic}
\usepackage[switch]{lineno}
\usepackage{amsfonts}
\usepackage{multirow}
\usepackage{subfigure}
\usepackage{bbding}
\usepackage{xcolor}
\usepackage{dsfont}
\usepackage{bm}
\usepackage{colortbl}


\urlstyle{same}



\newtheorem{theorem}{Theorem}





\pdfinfo{
	/TemplateVersion (IJCAI.2024.0)
}

\title{From Optimization to Generalization: Fair Federated Learning against \\Quality Shift via Inter-Client Sharpness Matching}



\author{
	Nannan Wu
	\and
	Zhuo Kuang\and
	Zengqiang Yan\thanks{Corresponding author}\And
	Li Yu\\
	\affiliations
	School of Electronic Information and Communications, Huazhong University of Science and Technology\\
	\emails
	\{wnn2000, kuangzhuo, z\_yan, hustlyu\}@hust.edu.cn
}

\begin{document}
	
	\maketitle
	
	\begin{abstract}
		Due to escalating privacy concerns, federated learning has been recognized as a vital approach for training deep neural networks with decentralized medical data. In practice, it is challenging to ensure consistent imaging quality across various institutions, often attributed to equipment malfunctions affecting a minority of clients. This imbalance in image quality can cause the federated model to develop an inherent bias towards higher-quality images, thus posing a severe fairness issue. In this study, we pioneer the identification and formulation of this new fairness challenge within the context of the imaging quality shift. Traditional methods for promoting fairness in federated learning predominantly focus on balancing empirical risks across diverse client distributions. This strategy primarily facilitates fair optimization across different training data distributions, yet neglects the crucial aspect of generalization. To address this, we introduce a solution termed Federated learning with Inter-client Sharpness Matching (FedISM). FedISM enhances both local training and global aggregation by incorporating sharpness-awareness, aiming to harmonize the sharpness levels across clients for fair generalization. Our empirical evaluations, conducted using the widely-used ICH and ISIC 2019 datasets, establish FedISM's superiority over current state-of-the-art federated learning methods in promoting fairness. Code is available at \textcolor{blue}{https://github.com/wnn2000/FFL4MIA}.
	\end{abstract}
	
	\section{Introduction}
	In light of escalating concerns regarding data privacy, federated learning (FL) \cite{mcmahan2017communication} has emerged as a promising approach for training deep neural networks in the realm of medical image analysis \cite{dou2021federated}. A significant challenge within FL is the inherent data heterogeneity \cite{ye2023heterogeneous,FCCL_CVPR22,FCCLPlus_TPAMI23} observed across various medical institutions, primarily attributed to their independent data collection processes. This heterogeneity has been examined from several perspectives in existing research, \textit{e.g.}, domain shift \cite{li2021fedbn,liu2021feddg,jiang2023iop}, label skew \cite{zhang2022federated,DBLP:conf/miccai/WuYYCY23}, and label quality variation \cite{DBLP:conf/ijcai/Wu0JCY23,DBLP:journals/mia/ChenLXY23,wu2023feda3i}. Nevertheless, the prevalent issue of quality heterogeneity \cite{fang2023robust} in medical imaging, a factor that could potentially raise new challenges for FL, remains under-explored.
	
	\begin{figure}[!t] 
		\centering
		\includegraphics[width=\columnwidth]{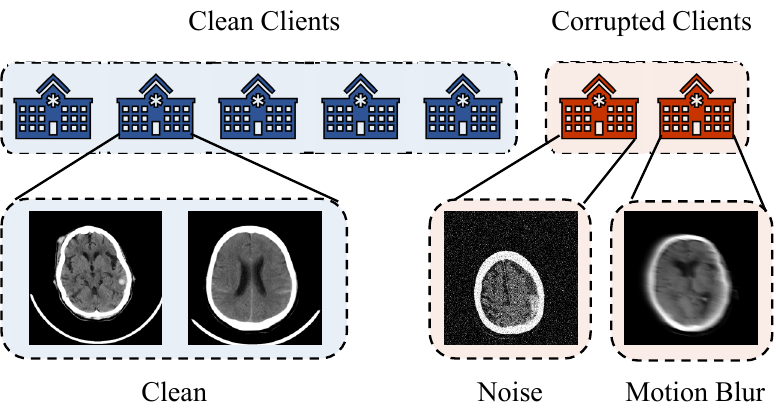}
		\caption{Imaging quality shift across clients. Most clients possess clean images, while others have corrupted images (\textit{e.g.}, exhibiting noise or blur).}
		\label{background}
	\end{figure}
	
	Despite relatively rigorous protocols in medical imaging environments, uniformity in image quality across various institutions cannot be strictly assured. As illustrated in Fig.~\ref{background}, images from some clients may exhibit noise due to equipment malfunctions or the necessity for low-dose imaging. Moreover, random movements from either patients or cameras in certain cases can result in motion blur. Typically, the proportion of low-quality (\textit{i.e.}, corrupted) images is smaller compared to high-quality (\textit{i.e.}, clean) images \cite{DBLP:conf/cvpr/HuangZX00000L23}. In such scenarios, characterized by quality shifts across clients, FL tends to exhibit a bias towards the more-prevalent clean images, thereby compromising the performance on the less-frequent corrupted images. Such a bias can be a critical concern when applying federated models to clients with corrupted images. Therefore, it is imperative to address the issue of biased performance under quality shifts.
	
	\begin{figure}[!t] 
		\centering
		\includegraphics[width=\columnwidth]{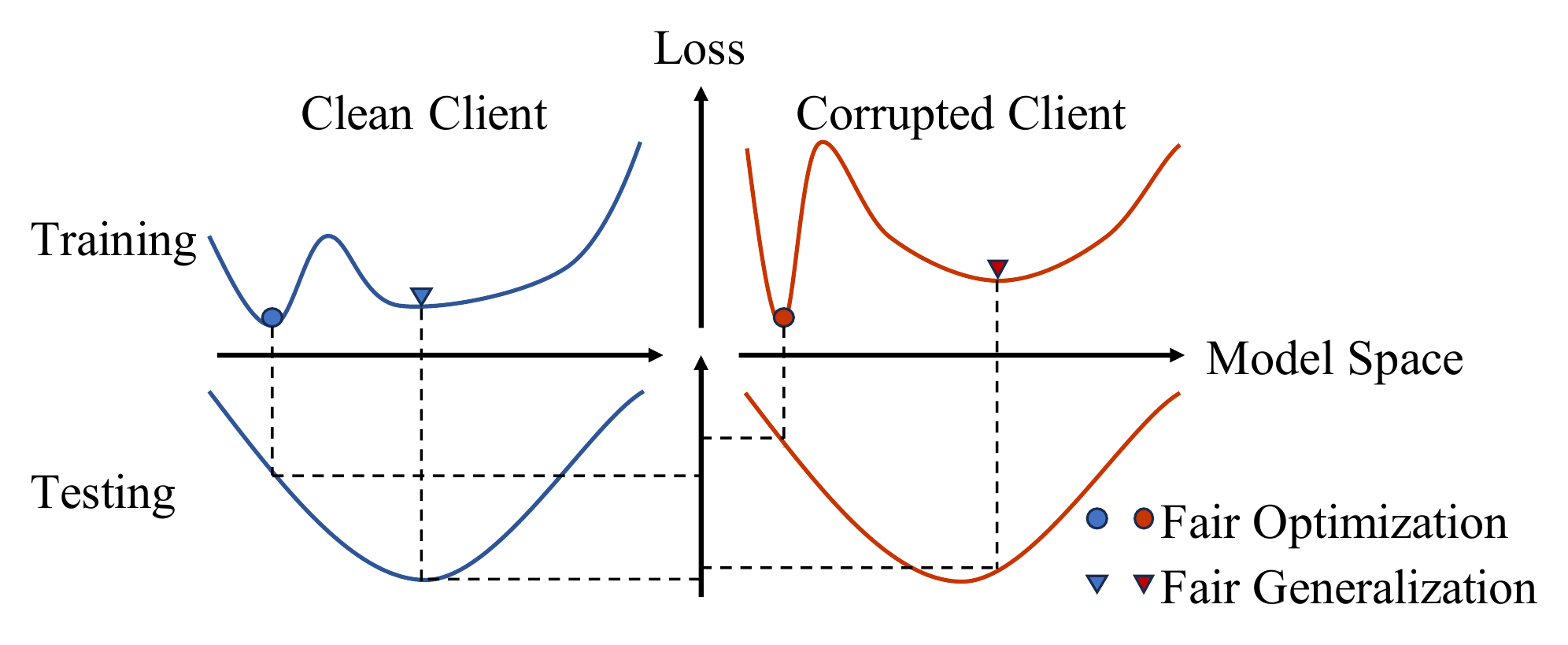}
		\caption{Motivations behind previous fair optimization and our fair generalization. Fair optimization aims for uniform and low loss values across clients, often resulting in convergence at sharp minima and poor testing performance. Comparatively, we focus on achieving uniform sharpness and converging at flat minima, thus enhancing fair generalization and testing performance.}
		\label{motivation}
	\end{figure}
	
	In this paper, we pioneer the identification and formulation of this significant challenge in FL within real-world medical contexts. In FL, especially under the mantle of privacy protection, there is an absence of prior knowledge about the imaging quality of each client, suggesting the necessity for a data-agnostic solution to this issue. We theoretically demonstrate improving performance on the poorest-quality images is equivalent to achieving client-level fairness in FL. Hence, we remodel this challenge as a problem of client-level fairness, \textit{i.e.}, \textbf{\textit{how to ensure equitable FL performance among clients with clean and corrupted image distributions?}} This investigation marks the first effort to promote fair FL across clients with image quality shifts, a departure from previous research focused on fairness under domain shifts \cite{jiang2023fair} or class distribution shifts \cite{li2019fair}. 
	
	In this work, we observe that existing methods to foster client-level performance fairness in FL typically modify the importance weights in global aggregation to harmonize certain training metrics (\textit{e.g.}, loss) across clients \cite{mohri2019agnostic,li2019fair,jiang2023fair}. However, as illustrated in Fig.~\ref{motivation}, while these methods facilitate fairness regarding optimization during training, they do not necessarily ensure fairness regarding generalization during testing. This gap stems from the issue that equalizing training losses for all clients may result in a simplistic convergence towards a sharp minimum, particularly for the minority distributions (\textit{i.e.}, corrupted image distributions), thereby impairing generalization on testing sets. To counter this, \textbf{\textit{our focus extends beyond fair optimization to fair generalization}}. Recent advancements in sharpness-aware minimization reveal an inverse relationship between the generalization capability and the sharpness of the loss surface \cite{foret2021sharpness,zhuang2022surrogate}. Inspired by this finding, we introduce a novel \textbf{Fed}erated learning framework with \textbf{I}nter-client \textbf{S}harpness \textbf{M}atching (\textbf{FedISM}), which aims to equalize the sharpness levels across clients for fair generalization. In this framework, local optimization in FL is made sharpness-aware, where each client's local update aims to minimize sharpness on its local data. Subsequently, weights for global aggregation are determined based on each client's sharpness level, with higher weights assigned to clients exhibiting greater sharpness. In this way, the global model update is more effective in minimizing sharpness in higher-level clients, leading to more uniform sharpness and fair generalization across both clean and corrupted image distributions as shown in Fig.~\ref{motivation}. It is worth mentioning that FedISM involves only simple modifications to local optimization and global aggregation compared to FedAvg \cite{mcmahan2017communication}, thus eschewing complex loss functions and additional regularization and offering ease of implementation.
	
	The contributions are summarized into three folds:
	
	\begin{itemize}
		\item New Fairness Challenge in FL - Imaging Quality Shift: We identify and articulate a new challenge in FL, specifically concentrating on achieving performance fairness across clients with diverse imaging qualities.
		
		\item Innovative Solution - FedISM: To address this challenge, our focus extends beyond previous fair optimization to fair generalization. Our proposed solution, FedISM, integrates sharpness-aware local updates with sharpness-dependent global aggregation, promoting uniform sharpness across clients and achieving more equitable generalization.
		
		\item Extensive Validation: We validate the effectiveness of FedISM through a series of experiments on two real-world medical image classification datasets, \textit{i.e.},  RSNA ICH and ISIC 2019. FedISM demonstrates superior performance, outperforming several state-of-the-art methods in promoting fairness in FL.
	\end{itemize}
	
	\section{Related Work}
	
	\subsection{Fair Federated Learning}
	Fairness \cite{huang2023federated} has become a crucial topic in FL, primarily concentrating on collaborative fairness \cite{lyu2020collaborative,xu2021gradient} and performance fairness \cite{li2019fair}. Collaborative fairness aims for the reward of each participant to be proportional to its contribution to the federation. In contrast, performance fairness advocates for unbiased performance across different devices/attributes/distributions. Existing solutions mainly achieve this goal by balancing weights of different objects to optimize fairly. For instance, q-FedAvg \cite{li2019fair} addresses this by prioritizing clients that are harder to optimize; FedCE \cite{jiang2023fair} tackles this issue by considering task-specific performance. Nevertheless, fairness respecting generalization has not been considered thoroughly till now, leading to sub-optimal outcomes.
	
	\subsection{Sharpness of Loss Surface}
	Bridging the gap between training optimization and testing generalization remains a pivotal challenge in machine learning. Recent developments in sharpness-aware minimization suggest that models tend to generalize better on flat minima than on sharp minima \cite{foret2021sharpness,zhuang2022surrogate}. Motivated by this insight, various studies have incorporated the concept of loss surface sharpness to address poor generalization. SharpDRO \cite{DBLP:conf/cvpr/HuangZX00000L23} merges sharpness with GroupDRO \cite{sagawa2019distributionally} to achieve robust generalization. ImbSAM \cite{zhou2023imbsam} focuses on enhancing the performance for tail classes in long-tailed recognition by minimizing sharpness in these classes. In addition to typical machine learning, it has recently been applied in FL \cite{qu2022generalized,caldarola2022improving,sun2023dynamic}. However, the relationship between sharpness and fairness within FL has not yet been investigated.

	\section{Methodology}
	
	\subsection{Preliminaries}
	For a typical image classification task with $C$ classes, we consider a cross-silo FL scenario with $K$ participants. Each $k$-th participant possesses a private dataset $D_k=\{(\boldsymbol x_i \in \mathcal{X}, y_i \in \mathcal{Y})\}_{i=1}^{N_k}, k \in [K]$, where $\mathcal{X}$ and $\mathcal{Y}=[C]$ represent the input image and label spaces, respectively. An image-label pair $(\boldsymbol x_i, y_i)$ is drawn from the client-specific distribution $\mathbb{P}_k(\boldsymbol x,y \mid a_k)$, with $a_k \in [A]$ indicating an attribute only influencing image quality in client $k$. This paper highlights the uneven distribution of clients across various quality attributes. Define $f(\cdot; \boldsymbol \theta): \mathcal{X} \rightarrow \Delta^{C-1}$ as a deep learning model parameterized by $\boldsymbol \theta$ and $\ell: \Delta^{C-1} \times \mathcal{Y} \rightarrow \mathbb{R_+}$ as the loss function, where $\Delta$ denotes the probability simplex. In this paper, the goal of FL is to optimize $\boldsymbol \theta$ for performance enhancement on images with the worst-performing quality:
	\begin{equation} \label{objective1}
		\boldsymbol \theta^* = \mathop{\arg\min}\limits_{\boldsymbol \theta} \{\max_{a \in [A]} \mathbb{E}_{(\boldsymbol x,y) \sim \mathbb{P}(\boldsymbol x, y \mid a)} [\ell(f(\boldsymbol x; \boldsymbol \theta), y)] \}.
	\end{equation}
	When treating each quality as a group, Eq. \ref{objective1} achieves group fairness which is similar to the aim in distributionally robust optimization \cite{sagawa2019distributionally}. However, in FL, there is no prior knowledge of group information (\textit{i.e.}, the imaging quality in each client), making group-wise design impractical. Therefore, we achieve Eq. \ref{objective1} via client fairness, as stated in the following theorem following \cite{papadaki2022minimax}:
	\begin{theorem}[Equivalence] \label{Equivalence}
		Assuming class distributions of the testing set and all clients' training sets are identical, we have:
		\begin{equation} \label{client_fairness}
			\resizebox{0.9\hsize}{!}{$
				\begin{aligned}
					&\boldsymbol \theta^*, \boldsymbol \lambda^* = \arg \mathop{\min}\limits_{\boldsymbol \theta} \mathop{\max}\limits_{\boldsymbol \lambda \in \Delta^{K-1}} \sum_{k=1}^{K} \lambda_k\mathbb{E}_{(\boldsymbol x,y) \sim \mathbb{P}_k(\boldsymbol x, y \mid a_k)} [\ell(f(\boldsymbol x; \boldsymbol \theta), y)], 
				\end{aligned}
				$}
		\end{equation}
		and
		\begin{equation} \label{group_fairness}
			\resizebox{0.9\hsize}{!}{$
				\begin{aligned}
					&\boldsymbol \theta^*, \boldsymbol \mu^* = \arg \mathop{\min}\limits_{\boldsymbol \theta} \mathop{\max}\limits_{\boldsymbol \mu \in \Delta^{A-1}} \sum_{u=1}^{A} \mu_u \mathbb{E}_{(\boldsymbol x,y) \sim \mathbb{P}(\boldsymbol x, y \mid u)} [\ell(f(\boldsymbol x; \boldsymbol \theta), y)],
				\end{aligned}
				$}
		\end{equation}
		where $\mu^*_u = \sum_{k=1}^{K} \mathds{1}_{a_k=u} \lambda^*_k$.
	\end{theorem}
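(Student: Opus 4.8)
The plan is to collapse both nested optimizations to a worst-case-risk minimization, identify the two resulting objectives, and then transport the optimal weights across the two formulations. First I would write $L_k(\boldsymbol\theta)=\mathbb{E}_{(\boldsymbol x,y)\sim\mathbb{P}_k(\boldsymbol x,y\mid a_k)}[\ell(f(\boldsymbol x;\boldsymbol\theta),y)]$ for the per-client risk and $G_u(\boldsymbol\theta)=\mathbb{E}_{(\boldsymbol x,y)\sim\mathbb{P}(\boldsymbol x,y\mid u)}[\ell(f(\boldsymbol x;\boldsymbol\theta),y)]$ for the per-group risk. Since $\sum_k\lambda_k L_k(\boldsymbol\theta)$ is linear in $\boldsymbol\lambda$ over the simplex $\Delta^{K-1}$, its maximum is attained at a vertex, so the inner maximization reduces to $\max_{k\in[K]}L_k(\boldsymbol\theta)$; identically, $\max_{\boldsymbol\mu\in\Delta^{A-1}}\sum_u\mu_u G_u(\boldsymbol\theta)=\max_{u\in[A]}G_u(\boldsymbol\theta)$. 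Thus the two problems become $\min_{\boldsymbol\theta}\max_{k}L_k(\boldsymbol\theta)$ and $\min_{\boldsymbol\theta}\max_{u}G_u(\boldsymbol\theta)$.

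The crucial step is to connect the two families of risks. Using the assumption that the label distribution is identical across all clients and the testing set, together with the modeling premise that $a_k$ absorbs all quality-related variation, I would factor the client law as $\mathbb{P}_k(\boldsymbol x,y\mid a_k)=\mathbb{P}(y)\,\mathbb{P}(\boldsymbol x\mid y,a_k)$ and argue it coincides with $\mathbb{P}(\boldsymbol x,y\mid u)$ whenever $a_k=u$. This yields the pointwise identity $L_k(\boldsymbol\theta)=G_{a_k}(\boldsymbol\theta)$ for every $\boldsymbol\theta$. Provided every quality level appears in at least one client (the map $k\mapsto a_k$ is surjective onto $[A]$), taking maxima gives $\max_{k}L_k(\boldsymbol\theta)=\max_{u}G_u(\boldsymbol\theta)$, so the two reduced objectives are the same function of $\boldsymbol\theta$ and hence share a common minimizer $\boldsymbol\theta^*$.

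For the weights, I would exploit an objective-preserving aggregation map. For any $\boldsymbol\lambda\in\Delta^{K-1}$, setting $\mu_u=\sum_k\mathds{1}_{a_k=u}\lambda_k$ produces a valid $\boldsymbol\mu\in\Delta^{A-1}$ (nonnegative and summing to one), and the identity $L_k=G_{a_k}$ gives $\sum_u\mu_u G_u(\boldsymbol\theta)=\sum_k\lambda_k L_k(\boldsymbol\theta)$ for all $\boldsymbol\theta$. Evaluated at $\boldsymbol\theta^*$, an optimal $\boldsymbol\lambda^*$ is supported on the worst-risk clients, which by $L_k=G_{a_k}$ map to worst-risk groups; hence its aggregate $\mu^*_u=\sum_k\mathds{1}_{a_k=u}\lambda^*_k$ is supported on the worst-risk groups and is therefore a maximizer of the group problem, establishing the claimed correspondence.

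The main obstacle I anticipate is not the value-matching but the weight-matching, because several clients or groups may tie for the worst risk at $\boldsymbol\theta^*$, so the optimal $\boldsymbol\lambda^*$ and $\boldsymbol\mu^*$ need not be unique and the statement must be read as an existence claim about compatible saddle points. To close this cleanly I would appeal to the minimax framework of \cite{papadaki2022minimax} (linearity/concavity in the weights together with suitable convexity and compactness in $\boldsymbol\theta$ guaranteeing a saddle point), and verify that the aggregation map carries a client-side saddle point to a genuine group-side one while never placing mass on an absent quality group.
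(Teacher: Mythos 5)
Your proof is correct and takes essentially the same route as the paper's: both arguments hinge on the pushforward map $\mu_u=\sum_{k}\mathds{1}_{a_k=u}\lambda_k$, its surjectivity onto $\Delta^{A-1}$ (every quality level appearing in at least one client), and the identification $\mathbb{P}_k(\boldsymbol x,y\mid a_k)=\mathbb{P}(\boldsymbol x,y\mid a_k)$ so that each client's risk equals its group's risk. Your vertex reduction of the inner maximization and your explicit support/tie-handling for the weight correspondence simply spell out steps the paper leaves implicit (it shows the two inner maxima coincide as functions of $\boldsymbol\theta$ and reads off the weight map), so there is no substantive difference in approach.
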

	\noindent The proof is detailed in the Appendix. Notably, even when label distribution shifts occur, they can be theoretically addressed by logit adjustment \cite{menon2020long,zhang2022federated}, thereby not considered to be addressed in this paper. Theorem \ref{Equivalence} demonstrates that achieving client fairness (Eq. \ref{client_fairness}) inherently ensures group fairness (Eq. \ref{group_fairness}).  Therefore, this paper subsequently concentrates on enhancing fairness across clients with various imaging qualities.
	
	\subsection{Previous Solution: Fair Optimization}
	Client-level unfairness in FL arises from a tendency to overlook certain clients, particularly those with limited data or outlier distributions. These clients are often overlooked because optimizing them does not significantly contribute to the overall optimization objective. To address this challenge, various approaches have been proposed. AFL \cite{mohri2019agnostic} concentrates efforts on the worst-performing client; q-FedAvg \cite{li2019fair} and FairFed \cite{ezzeldin2023fairfed} give greater weights to those clients with higher training losses; and FedCE \cite{jiang2023fair} focuses on clients with lower task-specific metrics. These methods collectively aim to balance optimization across the empirical distribution of each client, striving for more uniform risks among clients, as represented by the following optimization problem:
	\begin{equation} \label{fair_optimization}
		\mathop{\min}\limits_{\boldsymbol \theta} \mathop{\max}\limits_{\boldsymbol \lambda \in \Delta^{K-1}} \sum_{k=1}^{K} \frac{\lambda_k}{N_k} \sum_{(\boldsymbol x,y) \in D_k} \ell(f(\boldsymbol x; \boldsymbol \theta), y).
	\end{equation}
	Essentially, these solutions seek to achieve a balance in optimization by prioritizing clients that are performing worse. However, this strategy may result in clients with poorer performance rapidly converging to sharp minima, primarily because the rate of loss minimization is most pronounced when moving towards them. Due to the discrepancy between empirical and expected risks, this strategy to optimize fairly does not necessarily lead to client fairness in a strict sense, as denoted by Eq. \ref{client_fairness}. A conceptual illustration of this issue is presented in Fig. \ref{motivation}. Thus, it becomes crucial to extend the exploration of client fairness to include considerations of generalization, beyond mere optimization.
	
	\subsection{Measurement of Generalization: Sharpness}
	Since it is sub-optimal to measure generalization capacity by a single value of empirical risk (\textit{i.e.}, training loss), promoting fairness by merely seeking uniformity in such a metric is inappropriate. To tackle this problem, we should identify a new indicator more closely correlated with generalization ability. 
	
	A key limitation of using single training loss as a metric is its insensitivity to the geometric properties of the loss landscape, treating sharp and flat minima indiscriminately. To overcome this, we propose focusing on a range of loss values, specifically the sharpness of the loss surface \cite{foret2021sharpness,zhuang2022surrogate}. It is defined as the largest loss change in the vicinity of the initial model parameters:
	\begin{equation} \label{sharpness}
		\mathcal{S} := \max_{\Vert \boldsymbol \epsilon \Vert_2 \leq \rho}\{\ell(f(\boldsymbol x; \boldsymbol \theta + \boldsymbol \epsilon), y) -  \ell(f(\boldsymbol x; \boldsymbol \theta), y)\},
	\end{equation}
	where $\rho$ is a positive step parameter controlling the search radius. Calculating sharpness as per Eq. \ref{sharpness} poses a challenge due to the continuous and infinite nature of the perturbation. To simplify, the difference can be approximated linearly through the Taylor series when $\rho$ is sufficiently small:
	\begin{equation} \label{approximation}
		\ell(f(\boldsymbol x; \boldsymbol \theta + \boldsymbol \epsilon), y) -  \ell(f(\boldsymbol x; \boldsymbol \theta), y) \approx \boldsymbol \epsilon^\top \nabla \ell(f(\boldsymbol x; \boldsymbol \theta), y).
	\end{equation}
	This approximation enables us to identify the optimal perturbation by maximizing the right-hand side of the equation:
	\begin{equation} \label{optimal_perturbation}
		\boldsymbol \epsilon^* = \mathop{\arg\max}\limits_{\Vert \boldsymbol \epsilon \Vert_2 \leq \rho} \boldsymbol \epsilon^\top \nabla \ell(f(\boldsymbol x; \boldsymbol \theta), y) = \rho \frac{\nabla \ell(f(\boldsymbol x; \boldsymbol \theta), y)}{\Vert\nabla \ell(f(\boldsymbol x; \boldsymbol \theta), y) \Vert_2}.
	\end{equation}
	Therefore, sharpness can be computed more feasibly as:
	\begin{equation} \label{sharpness1}
		\mathcal{S} \approx  \ell(f(\boldsymbol x; \boldsymbol \theta+ \boldsymbol \epsilon^*), y) -  \ell(f(\boldsymbol x; \boldsymbol \theta), y).
	\end{equation}
	
	Unlike a single training loss, sharpness reflects the rate of change in training loss across the loss surface. Prior research has shown that this rate of change is closely related to generalization capacity. To be specific, models generally perform better on a flat minimum (\textit{i.e.}, with smaller sharpness) than on a sharp minimum (\textit{i.e.}, with larger sharpness) \cite{foret2021sharpness,zhuang2022surrogate}. Therefore, we select this indicator to measure generalization capacity.

	\subsection{Sharpness Matching for Fair Generalization}
	
	\begin{figure}[!t] 
		\centering
		\includegraphics[width=1.0\columnwidth]{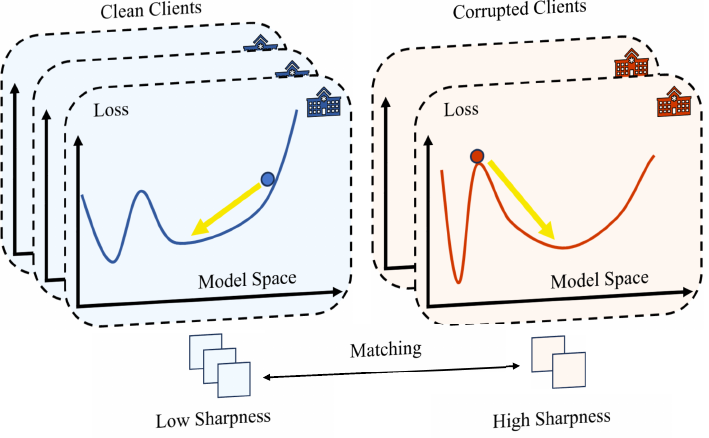}
		\caption{Illustration of FedISM. Our insight is to ensure the uniformity of sharpness across clients, leading to fair generalization.}
		\label{framework}
	\end{figure}
	
	As the sharpness of the loss surface is indicative of a model's generalization ability, we propose an FL method with inter-client sharpness matching (FedISM) to establish a uniform sharpness distribution across clients, thereby achieving fairer generalization. The overview of FedISM is depicted in Fig. \ref{framework}, and details are as follows.
	
	Our proposed method is structured around the following objective:
	\begin{equation} \label{fair_generalization}
		\mathop{\min}\limits_{\boldsymbol \theta} \mathop{\max}\limits_{\boldsymbol \lambda \in \Delta^{K-1}} \sum_{k=1}^{K} \frac{\lambda_k}{N_k} \sum_{(\boldsymbol x,y) \in D_k} \mathcal{S}(f(\boldsymbol x; \boldsymbol \theta), y),
	\end{equation}
	where $\mathcal{S}(f(\boldsymbol x; \boldsymbol \theta), y)$ represents the sharpness for a given sample and model, as defined in Eq. \ref{sharpness1}. Unlike previous solutions which focus on a uniformly-low loss (Eq. \ref{fair_optimization}), our key objective is to achieve a uniformly-low sharpness across clients, thereby avoiding convergence to sharp minima while aligning empirical risks. Considering that lower sharpness typically reflects better generalization, this strategy emphasizes fairness regarding generalization more.
	
	\begin{algorithm}[!t]
		\caption{Pseudocode of \colorbox[rgb]{1.0, 0.90, 0.87}{$\texttt{FedAvg}$} and our \colorbox[rgb]{0.87, 0.90, 1}{$\texttt{FedISM}$}}
		\label{alg:FedISM}
		\textbf{Input}: number of clients $K$, local datasets $\{D_1, \dots, D_K\}$, local dataset size, total communication rounds $T$, learning rate of local training $\eta$.\\
		\textbf{Output}: final global model $\boldsymbol \theta_{T+1}$
		\begin{algorithmic}[1] 
			\STATE Initialize the global model $\boldsymbol \theta_1$
			\FOR{$t = 1, 2, \dots, T$}
			\FOR{Client $k = 1, 2, \dots, K$ in parallel}
			\STATE $\boldsymbol \theta_{(t,k)} \leftarrow \boldsymbol \theta_t$ \hfill \textcolor{blue}{$\triangleright$ download the global model}
			\FOR{$(\boldsymbol x_i, y_i) \in D_k$}
			\STATE \colorbox[rgb]{1.0, 0.90, 0.87}{Update $\boldsymbol \theta_{(t,k)}$ with $(\boldsymbol x_i, y_i)$ by Eq. \ref{GD}}
			\STATE \colorbox[rgb]{0.87, 0.90, 1}{Update $\boldsymbol \theta_{(t,k)}$ with $(\boldsymbol x_i, y_i)$ by Eq. \ref{SAM}}
			\ENDFOR
			\ENDFOR
			\STATE \colorbox[rgb]{1.0, 0.90, 0.87}{$\boldsymbol \theta_{t+1} \leftarrow$ Aggregate $\{\boldsymbol \theta_{(t,k)}\}_{k=1}^K$ with $\boldsymbol w_{\texttt{Avg}}$ (Eq. \ref{FedAvg_weights})}
			\STATE \colorbox[rgb]{0.87, 0.90, 1}{$\boldsymbol \theta_{t+1} \leftarrow$ Aggregate $\{\boldsymbol \theta_{(t,k)}\}_{k=1}^K$ with $\boldsymbol w$ (Eq. \ref{FedISM_weights1})}
			\ENDFOR
			\RETURN $\boldsymbol \theta_{T+1}$
		\end{algorithmic}
	\end{algorithm}
	
	In conventional local training in FL, \textit{e.g.}, FedAvg \cite{mcmahan2017communication}, the objective is to minimize the loss on local data via gradient descent:
	\begin{equation} \label{GD}
		\boldsymbol \theta \leftarrow \boldsymbol \theta - \eta \nabla \ell(f(\boldsymbol x; \boldsymbol \theta), y),
	\end{equation}
	with $\eta$ as the learning rate. However, it tends to find an isolated point of low loss, without aiming for a flat minimum as desired in Eq. \ref{fair_generalization}. To address this, local training should incorporate sharpness-awareness. Inspired by sharpness-aware minimization \cite{foret2021sharpness}, the update rule is adapted to:
	\begin{equation} \label{SAM}
		\boldsymbol \theta \leftarrow \boldsymbol \theta - \eta \nabla \ell(f(\boldsymbol x; \boldsymbol \theta + \boldsymbol \epsilon^*), y),
	\end{equation}
	where $\boldsymbol \epsilon^*$ represents the optimal perturbation that maximizes the change in the training loss as Eq. \ref{optimal_perturbation}. This modification directs the optimization towards a point where the surrounding area exhibits the lowest loss, effectively minimizing the sharpness of the loss surface.
	
	After each round of local training, gradients from all clients are transmitted to the server for global aggregation. Notably, under sharpness-aware minimization (Eq. \ref{SAM}), each client's gradient is representative of the direction to minimize sharpness with respect to local data. The crux of achieving uniform sharpness across clients lies in how these gradients are fairly aggregated. Traditional FedAvg \cite{mcmahan2017communication} assigns aggregation weights based on the quantity of data each client contributes, as defined by:
	\begin{equation} \label{FedAvg_weights}
		\boldsymbol w_{\texttt{Avg}} = \frac{1}{ \sum_{k=1}^{K} N_k}[N_1, N_2, \cdots, N_K]^\top.
	\end{equation}
	However, such a weighting policy may not facilitate uniform sharpness. Given that clients with corrupted images typically have less data, the global update tends to be dominated by clients with clean images. Consequently, this might only reduce the sharpness for clean clients, leading to a disparity in sharpness similar to the loss disparity addressed by fair optimization \cite{mohri2019agnostic,li2019fair,ezzeldin2023fairfed,jiang2023fair}. To solve this, we propose sharpness-aware aggregation:
	\begin{equation} \label{FedISM_weights}
		\widetilde{\boldsymbol w}_{t} = \frac{1}{ \sum_{k=1}^{K} \mathcal{S}_{k,t}^q}[\mathcal{S}_{1,t}^q, \mathcal{S}_{2,t}^q, \cdots, \mathcal{S}_{K,t}^q]^\top,
	\end{equation}
	where $\mathcal{S}_{k,t}$ denotes the sharpness calculated on the entire local dataset $D_k$ at any round $t$ by Eq. \ref{sharpness1}, and $q$ is a predetermined positive parameter. This strategy emphasizes clients with higher sharpness levels during aggregation. As $q$ increases, the aggregation process focuses more on these high-sharpness clients. In extreme cases, aggregation will exclusively focus on the client exhibiting the highest sharpness given $q \rightarrow + \infty$. To conclude, this method ensures that the sharpness levels of clients with initially higher sharpness are primarily reduced, thus promoting a uniform sharpness distribution across all clients. In this way, it accomplishes the objective defined in Eq. \ref{fair_generalization}. To maintain stability in federated training, a moving average is further employed for rounds $t>1$, formulated as:
	\begin{equation} \label{FedISM_weights1}
		{\boldsymbol w}_{t} = \beta \widetilde{\boldsymbol w}_{t} + (1-\beta) {\boldsymbol w}_{t-1}.
	\end{equation}
	In particular, we set ${\boldsymbol w}_{1} = \widetilde{\boldsymbol w}_{1}$ for the first round.
	
	It is important to note that FedISM only requires clients to share their sharpness values instead of any  information regarding data distributions (\textit{e.g.}, imaging quality of each client), which is privacy-preserving.
	
	To facilitate a clearer understanding, the processes of both FedISM and the conventional FedAvg \cite{mcmahan2017communication} are summarized in Algorithm \ref{alg:FedISM}. Compared to FedAvg, FedISM modifies only the local optimizer and global aggregation weights, without the need to introduce complex loss functions or additional regularization techniques. Such a streamlined approach enhances the ease of implementation.

	\section{Experiments}
	We conduct a series of experiments to assess the effectiveness of FedISM. More results and discussion can be found in Appendix.
	
	\subsection{Experimental Setup}
	
	\subsubsection{Datasets}
	Two medical datasets are used for evaluation, in line with prior FL research \cite{jiang2022dynamic,DBLP:conf/ijcai/Wu0JCY23}:
	\begin{itemize}
		\item RSNA ICH \cite{flanders2020construction}: The task is to classify each CT slice into five intracranial hemorrhage (ICH) subtypes. Following \cite{jiang2022dynamic}, we randomly select 25000 images for experiments.
		\item ISIC 2019 \cite{tschandl2018ham10000,codella2018skin,combalia2019bcn20000}: This dataset contains 25331 images for developing models to classify eight skin diseases.
	\end{itemize}
	Both datasets are split into training and test sets in an 8:2 ratio and resized to $224 \times 224$ pixels following the standard operation \cite{jiang2022dynamic}. For data partitioning, training sets are partitioned into 20 clients using a Dirichlet distribution (\textit{i.e.}, $Dir(1.0)$), simulating the prevalent label distribution shifts though this paper does not focus mainly on it. Imaging quality shifts are created via Gaussian noise added on a subset of clients following \cite{hendrycks2019benchmarking}.
	
	\subsubsection{Model}
	For standard evaluation, pretrained ResNet-18 \cite{he2016deep} is used as the base model for all experiments.
	
	\subsubsection{Implement Details}
	To mitigate label distribution shifts between local training and testing sets, we incorporate logit adjustment \cite{menon2020long} in the training of local models. We train batches of 32 images using the Adam optimizer, with a constant learning rate of 0.0003, beta values of (0.9, 0.999), and a weight decay of 0.0005. For FL specifics, we set a maximum of 300 communication rounds and the local epoch as 1. These parameters are consistent across all experiments to facilitate fair comparison. In FedISM, we implement GSAM \cite{zhuang2022surrogate} for sharpness-aware minimization, with default settings of $q$ = 2.0 and $\beta$ = 0.5.
	
	\subsubsection{Evaluation Strategy}
	The federated model's performance is evaluated on both an unaltered clean testing set and a generated corrupted testing set (with the identically distributed Gaussian noise in corrupted clients). We use class-balanced accuracy (ACC) and the area under the receiver operating characteristic curve (AUC) as our evaluation metrics. To ensure the robustness of results, three independent experiments are conducted, and performance is averaged over the last five communication rounds, in line with \cite{huang2023rethinking}.
	
	\subsection{Comparison to State-of-the-Arts} \label{sec:sota}
	To validate the superiority, we compare FedISM with several leading methods, including the basic FL approach FedAvg \cite{mcmahan2017communication}, and five advanced fair FL methods: Agnostic-FL \cite{mohri2019agnostic}, q-FedAvg \cite{li2019fair}, FairFed \cite{ezzeldin2023fairfed}, FedCE \cite{jiang2023fair}, and FedGA \cite{zhang2023federated}. Comprehensive details on these methods and implementation strategies are available in the Appendix. In our experiments, 4 out of 20 clients are equipped with corrupted images, constituting a corrupted client ratio of 20\%.
	
	Tab. \ref{tab:SOTA} summarizes the mean and standard deviation of ACC and AUC for both clean and corrupted images, as well as their average. Although most fair FL methods improve the performance on corrupted images, excluding the less stable Agnostic-FL, they focus primarily on fair optimization rather than generalization, which can be suboptimal. Comparatively, our FedISM, with its emphasis on sharpness minimization, achieves better generalization on corrupted images. For instance, on the ICH dataset, FedISM outperforms FedAvg and the second-best method (FedGA) by 10.85\% and 3.78\%, respectively. Notably, previous methods often boost the performance on corrupted images at the expense of performance degradation on clean images in this setting. For instance, FedGA's ACC on clean ICH images falls by 4.84\% compared to FedAvg. This trade-off can be problematic in medical scenarios, potentially discouraging high-quality institutions from participation in FL. In contrast, FedISM not only enhances the performance on corrupted images but also maintains top results on clean images. This balanced improvement is crucial in medical scenarios, ensuring accurate diagnostics and encouraging broader participation in FL.
	
	\begin{table*}[!t]
		\centering
		\resizebox{1.0\textwidth}{!}{
			\begin{tabular}{c|c|cccccccccccc}
				\toprule
				\hline
				\multirow{4}{*}{Category} & \multirow{4}{*}{Method} & \multicolumn{12}{c}{Dataset (Corrupted Clients Ratio: 20\%)} \\ \cline{3-14} 
				& & \multicolumn{6}{c|}{ICH} & \multicolumn{6}{c}{ISIC 2019} \\ \cline{3-14} 
				& & \multicolumn{2}{c}{Clean} & \multicolumn{2}{c}{Corrupted} & \multicolumn{2}{c|}{Average} & \multicolumn{2}{c}{Clean} & \multicolumn{2}{c}{Corrupted} & \multicolumn{2}{c}{Average} \\ \cline{3-14} 
				& & ACC $\uparrow$ & AUC $\uparrow$ & ACC $\uparrow$ & AUC $\uparrow$ & ACC $\uparrow$ & \multicolumn{1}{c|}{AUC $\uparrow$} & ACC $\uparrow$ & AUC $\uparrow$ & ACC $\uparrow$ & AUC $\uparrow$ & ACC $\uparrow$ & AUC $\uparrow$ \\ \hline
				\multirow{2}{*}{Naive FL} & FedAvg & $76.77$ & $94.58$ & $53.66$ & $84.37$ & $65.21$ & \multicolumn{1}{c|}{$89.48$} & $64.43$ & $91.91$ & $38.26$ & $78.03$ & $51.35$ & $84.97$ \\
				& (AISTATS'17) & $(0.68)$ & $(0.20)$ & $(1.28)$ & $(0.73)$ & $(0.75)$ & \multicolumn{1}{c|}{$(0.34)$} & $(1.01)$ & $(0.40)$ & $(1.57)$ & $(1.32)$ & $(0.70)$ & $(0.64)$ \\ \hline
				\multirow{10}{*}{Fair FL} & Agnostic-FL & $55.13$ & $83.20$ & $46.69$ & $78.13$ & $50.91$ & \multicolumn{1}{c|}{$80.67$} & $39.66$ & $78.64$ & $33.55$ & $75.62$ & $36.60$ & $77.13$ \\
				& (ICML'19) & $(7.28)$ & $(4.86)$ & $(7.54)$ & $(4.79)$ & $(2.56)$ & \multicolumn{1}{c|}{$(1.65)$} & $(12.94)$ & $(9.08)$ & $(8.88)$ & $(6.48)$ & $(3.30)$ & $(2.52)$ \\ \cline{2-14} 
				& q-FedAvg & $75.94$ & $94.36$ & $59.46$ & $86.81$ & $67.70$ & \multicolumn{1}{c|}{$90.58$} & $65.20$ & $91.59$ & $44.54$ & $82.88$ & $54.87$ & $87.24$ \\
				& (ICLR'20) & $(1.05)$ & $(0.28)$ & $(1.90)$ & $(0.95)$ & $(0.69)$ & \multicolumn{1}{c|}{$(0.40)$} & $(1.26)$ & $(0.67)$ & $(0.80)$ & $(0.61)$ & $(0.58)$ & $(0.47)$ \\ \cline{2-14} 
				& FairFed & $74.13$ & $93.55$ & $60.27$ & $86.74$ & $67.20$ & \multicolumn{1}{c|}{$90.15$} & $60.36$ & $90.29$ & $49.04$ & $84.86$ & $54.70$ & $87.58$ \\
				& (AAAI'23) & $(1.15)$ & $(0.30)$ & $(1.08)$ & $(0.63)$ & $(0.79)$ & \multicolumn{1}{c|}{$(0.38)$} & $(1.59)$ & $(0.54)$ & $(1.80)$ & $(0.64)$ & $(1.39)$ & $(0.52)$ \\ \cline{2-14}
				& FedCE & $75.82$ & $94.31$ & $58.77$ & $86.93$ & $67.29$ & \multicolumn{1}{c|}{$90.62$} & $62.21$ & $90.37$ & $45.25$ & $83.37$ & $53.73$ & $86.87$ \\
				& (CVPR'23) & $(0.45)$ & $(0.10)$ & $(1.92)$ & $(0.70)$ & $(0.94)$ & \multicolumn{1}{c|}{$(0.37)$} & $(1.27)$ & $(0.56)$ & $(2.65)$ & $(1.72)$ & $(1.11)$ & $(0.91)$ \\ \cline{2-14}
				& FedGA & $71.93$ & $92.88$ & $60.73$ & $87.45$ & $66.33$ & \multicolumn{1}{c|}{$90.16$} & $59.56$ & $89.53$ & $48.16$ & $84.64$ & $53.86$ & $87.08$ \\
				& (CVPR'23) & $(1.43)$ & $(0.36)$ & $(1.24)$ & $(0.47)$ & $(0.99)$ & \multicolumn{1}{c|}{$(0.33)$} & $(1.55)$ & $(0.55)$ & $(1.38)$ & $(0.47)$ & $(1.03)$ & $(0.47)$ \\ \hline
				\multirow{2}{*}{Ours} & \multirow{2}{*}{FedISM} & $\bm{77.52}$ & $\bm{95.02}$ & $\bm{64.51}$ & $\bm{89.56}$ & $\bm{71.01}$ & \multicolumn{1}{c|}{$\bm{92.29}$} & $\bm{66.94}$ & $\bm{93.21}$ & $\bm{51.62}$ & $\bm{86.89}$ & $\bm{59.28}$ & $\bm{90.05}$ \\
				& & $\bm{(0.58)}$ & $\bm{(0.16)}$ & $\bm{(0.64)}$ & $\bm{(0.22)}$ & $\bm{(0.36)}$ & \multicolumn{1}{c|}{$\bm{(0.13)}$} & $\bm{(0.84)}$ & $\bm{(0.32)}$ & $\bm{(1.39)}$ & $\bm{(0.49)}$ & $\bm{(0.52)}$ & $\bm{(0.18)}$ \\ \hline
				\bottomrule
			\end{tabular}
		}
		\caption{Comparison of the mean (\%) and standard deviation (\%) for ACC and AUC against state-of-the-art methods. Values without parentheses represent the mean, while those within parentheses indicate the standard deviation.}
		\label{tab:SOTA}
	\end{table*}

	\subsection{Ablation Study}
	FedISM is comprised of two key components: Sharpness-Aware Local Training (SALT, Eq. \ref{SAM}) and Sharpness-Aware Global Aggregation (SAGA, Eqs. \ref{FedISM_weights} and \ref{FedISM_weights1}). To evaluate the effectiveness of each component, we conduct ablation studies by integrating them individually with FedAvg, with performance summaries provided in Table \ref{tab:Ablation}. SALT, by directing the model towards flat minima, enhances FedAvg's performance on both clean and corrupted images. Nonetheless, this approach is not fully optimized for fairness, as it fails to address the disparity in sharpness between clean and corrupted image distributions. This is where SAGA comes into play; it helps the model prioritize distributions with poorer generalization capacity, typically those of corrupted images. As a result, the combination of both SALT and SAGA yields the best performance on corrupted images, demonstrating the comprehensive effectiveness of FedISM's designs.
	
	\begin{table*}[!t]
		\centering
		\resizebox{1.0\textwidth}{!}{
			\begin{tabular}{c|cccccccccccc}
				\toprule
				\hline
				\multirow{3}{*}{Method}            & \multicolumn{12}{c}{ICH (Corrupted Clients Ratio: 20\%)}                                                                                                                                                                                             \\ \cline{2-13} 
				& \multicolumn{4}{c|}{Clean}                                                             & \multicolumn{4}{c|}{Crorrupted}                                                         & \multicolumn{4}{c}{Average}                                       \\ \cline{2-13} 
				& ACC $\uparrow$    & $\Delta$ACC $\uparrow$              & AUC $\uparrow$    & \multicolumn{1}{c|}{$\Delta$AUC $\uparrow$}              & ACC $\uparrow$    & $\Delta$ACC $\uparrow$               & AUC $\uparrow$    & \multicolumn{1}{c|}{$\Delta$AUC $\uparrow$}              & ACC $\uparrow$    & $\Delta$ACC $\uparrow$              & AUC $\uparrow$    & $\Delta$AUC $\uparrow$              \\ \hline
				FedAvg                             & $76.77$  & \multirow{2}{*}{-}     & $94.58$  & \multicolumn{1}{c|}{\multirow{2}{*}{-}}     & $53.66$  & \multirow{2}{*}{-}      & $84.37$  & \multicolumn{1}{c|}{\multirow{2}{*}{-}}     & $65.21$  & \multirow{2}{*}{-}     & $89.48$  & \multirow{2}{*}{-}     \\
				(AISTATS'17)                       & $(0.68)$ &                        & $(0.20)$ & \multicolumn{1}{c|}{}                       & $(1.28)$ &                         & $(0.73)$ & \multicolumn{1}{c|}{}                       & $(0.75)$ &                        & $(0.34)$ &                        \\ \hline
				FedAvg                             & $\bm{79.21}$  & \multirow{2}{*}{$\bm{+2.44}$} & $\bm{95.66}$  & \multicolumn{1}{c|}{\multirow{2}{*}{$\bm{+1.08}$}} & $55.29$  & \multirow{2}{*}{$+1.63$}  & $86.61$  & \multicolumn{1}{c|}{\multirow{2}{*}{$+2.24$}} & $67.25$  & \multirow{2}{*}{$+2.04$} & $91.14$  & \multirow{2}{*}{$+1.66$} \\
				+SALT     & $\bm{(0.21)}$ &                        & $\bm{(0.10)}$ & \multicolumn{1}{c|}{}                       & $(2.05)$ &                         & $(0.70)$ & \multicolumn{1}{c|}{}                       & $(1.08)$ &                        & $(0.32)$ &                        \\ \hline
				FedAvg                             & $75.45$  & \multirow{2}{*}{$-1.32$} & $93.91$  & \multicolumn{1}{c|}{\multirow{2}{*}{$-0.67$}} & $61.24$  & \multirow{2}{*}{$+7.58$}  & $87.46$  & \multicolumn{1}{c|}{\multirow{2}{*}{$+3.09$}} & $68.34$  & \multirow{2}{*}{$+3.13$} & $90.68$  & \multirow{2}{*}{$+1.20$} \\
				+SAGA & $(1.20)$ &                        & $(0.49)$ & \multicolumn{1}{c|}{}                       & $(1.81)$ &                         & $(0.95)$& \multicolumn{1}{c|}{}                       & $(0.60)$ &                        & $(0.26)$ &                        \\ \hline
				FedISM                             & $77.52$  & \multirow{2}{*}{$+0.75$} & $95.02$  & \multicolumn{1}{c|}{\multirow{2}{*}{$+0.44$}} & $\bm{64.51}$  & \multirow{2}{*}{$\bm{+10.85}$} & $\bm{89.56}$  & \multicolumn{1}{c|}{\multirow{2}{*}{$\bm{+5.19}$}} & $\bm{71.01}$  & \multirow{2}{*}{$\bm{+5.80}$} & $\bm{92.29}$  & \multirow{2}{*}{$\bm{+2.81}$} \\
				(Ours)                             & $(0.58)$ &                        & $(0.16)$ & \multicolumn{1}{c|}{}                       & $\bm{(0.64)}$ &                         & $\bm{(0.22)}$ & \multicolumn{1}{c|}{}                       & $\bm{(0.36)}$ &                        & $\bm{(0.13)}$ &                        \\ \hline
				\bottomrule
			\end{tabular}
		}
		\caption{Component-wise ablation study in mean (\%) and standard deviation (\%) in ACC and AUC. Values without parentheses represent the mean, while those within parentheses indicate the standard deviation. $\Delta$ACC and $\Delta$AUC represent the difference to ACC and AUC of FedAvg, respectively.}
		\label{tab:Ablation}
	\end{table*}

	\subsection{SALT Helps Fair Optimization}
	A key motivation for developing Sharpness-Aware Local Training (SALT) is to address the issue of fair optimization methods converging to sharp minima, often resulting in suboptimal performance (see Fig. \ref{motivation}). To validate this premise and demonstrate SALT's effectiveness, we integrate SALT with existing fair optimization methods and analyze the impact on performance enhancement. The results, presented in Tab. \ref{tab:Combining}, reveal that, aside from Agnostic-FL \cite{mohri2019agnostic} which shows large standard deviations, other methods exhibit performance improvements when combined with SALT. It not only confirms that SALT is an effective and adaptable component for fair FL, but also validates our initial motivation for its design.
	
	\begin{table}[!t]
		\centering
		\resizebox{1.0\columnwidth}{!}{
		\renewcommand{\arraystretch}{0.993}
		\begin{tabular}{l|cccc}
			\toprule
			\hline
			& \multicolumn{4}{c}{ICH (Corrupted Clients Ratio:   20\%)}                                                            \\ \cline{2-5} 
			& \multicolumn{4}{c}{Corrupted}                                                                                        \\ \cline{2-5} 
			\multirow{-3}{*}{Method}                                              & ACC     & $\Delta$ACC $\uparrow$                                       & AUC    & $\Delta$AUC $\uparrow$                                       \\ \hline
			& $53.66$   &                                                 & $84.37$  &                                                 \\
			\multirow{-2}{*}{FedAvg}                                              & $(1.28)$  & \multirow{-2}{*}{-}                             & $(0.73)$ & \multirow{-2}{*}{-}                             \\ \hline
			\rowcolor[HTML]{E4E4E4} 
			\multicolumn{1}{r|}{\cellcolor[HTML]{E4E4E4}}                         & $55.29$   & \cellcolor[HTML]{E4E4E4}                        & $86.61$  & \cellcolor[HTML]{E4E4E4}                        \\
			\rowcolor[HTML]{E4E4E4} 
			\multicolumn{1}{r|}{\multirow{-2}{*}{\cellcolor[HTML]{E4E4E4}+ SALT}} & $(2.05)$  & \multirow{-2}{*}{\cellcolor[HTML]{E4E4E4}$+1.63$} & $(0.70)$ & \multirow{-2}{*}{\cellcolor[HTML]{E4E4E4}$+2.24$} \\ \hline
			& $46.69$   &                                                 & $78.13$  &                                                 \\
			\multirow{-2}{*}{Agnostic-FL}                                         & $(7.54)$  & \multirow{-2}{*}{-}                             & $(4.79)$ & \multirow{-2}{*}{-}                             \\ \hline
			\rowcolor[HTML]{E4E4E4} 
			\multicolumn{1}{r|}{\cellcolor[HTML]{E4E4E4}}                         & $45.91$   & \cellcolor[HTML]{E4E4E4}                        & $78.23$  & \cellcolor[HTML]{E4E4E4}                        \\
			\rowcolor[HTML]{E4E4E4} 
			\multicolumn{1}{r|}{\multirow{-2}{*}{\cellcolor[HTML]{E4E4E4}+ SALT}} & $(10.89)$ & \multirow{-2}{*}{\cellcolor[HTML]{E4E4E4}$-0.78$} & $(6.90)$ & \multirow{-2}{*}{\cellcolor[HTML]{E4E4E4}$+0.10$} \\ \hline
			& $59.46$   &                                                 & $86.81$  &                                                 \\
			\multirow{-2}{*}{q-FedAvg}                                            & $(1.90)$  & \multirow{-2}{*}{-}                             & $(0.95)$ & \multirow{-2}{*}{-}                             \\ \hline
			\rowcolor[HTML]{E4E4E4} 
			\multicolumn{1}{r|}{\cellcolor[HTML]{E4E4E4}}                         & $61.30$   & \cellcolor[HTML]{E4E4E4}                        & $88.66$  & \cellcolor[HTML]{E4E4E4}                        \\
			\rowcolor[HTML]{E4E4E4} 
			\multicolumn{1}{r|}{\multirow{-2}{*}{\cellcolor[HTML]{E4E4E4}+ SALT}} & $(1.57)$  & \multirow{-2}{*}{\cellcolor[HTML]{E4E4E4}$+1.84$} & $(0.51)$ & \multirow{-2}{*}{\cellcolor[HTML]{E4E4E4}$+1.85$} \\ \hline
			& $60.27$   &                                                 & $86.74$  &                                                 \\
			\multirow{-2}{*}{FairFed}                                             & $(1.08)$  & \multirow{-2}{*}{-}                             & $(0.63)$ & \multirow{-2}{*}{-}                             \\ \hline
			\rowcolor[HTML]{E4E4E4} 
			\multicolumn{1}{r|}{\cellcolor[HTML]{E4E4E4}}                         & $62.86$   & \cellcolor[HTML]{E4E4E4}                        & $89.14$  & \cellcolor[HTML]{E4E4E4}                        \\
			\rowcolor[HTML]{E4E4E4} 
			\multicolumn{1}{r|}{\multirow{-2}{*}{\cellcolor[HTML]{E4E4E4}+ SALT}} & $(1.08)$  & \multirow{-2}{*}{\cellcolor[HTML]{E4E4E4}$+2.59$} & $(0.39)$ & \multirow{-2}{*}{\cellcolor[HTML]{E4E4E4}$+2.40$} \\ \hline
			& $58.77$   &                                                 & $86.93$  &                                                 \\
			\multirow{-2}{*}{FedCE}                                               & $(1.92)$ & \multirow{-2}{*}{-}                             & $(0.70)$ & \multirow{-2}{*}{-}                             \\ \hline
			\rowcolor[HTML]{E4E4E4} 
			\multicolumn{1}{r|}{\cellcolor[HTML]{E4E4E4}}                         & $62.34$   & \cellcolor[HTML]{E4E4E4}                        & $88.81$  & \cellcolor[HTML]{E4E4E4}                        \\
			\rowcolor[HTML]{E4E4E4} 
			\multicolumn{1}{r|}{\multirow{-2}{*}{\cellcolor[HTML]{E4E4E4}+ SALT}} & $(0.94)$  & \multirow{-2}{*}{\cellcolor[HTML]{E4E4E4}$+3.57$} & $((0.41)$ & \multirow{-2}{*}{\cellcolor[HTML]{E4E4E4}$+1.88$} \\ \hline
			& $60.73$   &                                                 & $87.45$  &                                                 \\
			\multirow{-2}{*}{FedGA}                                               & $(1.24)$  & \multirow{-2}{*}{-}                             & $(0.47)$ & \multirow{-2}{*}{-}                             \\ \hline
			\rowcolor[HTML]{E4E4E4} 
			\multicolumn{1}{r|}{\cellcolor[HTML]{E4E4E4}}                         & $61.76$   & \cellcolor[HTML]{E4E4E4}                        & $88.56$  & \cellcolor[HTML]{E4E4E4}                        \\
			\rowcolor[HTML]{E4E4E4} 
			\multicolumn{1}{r|}{\multirow{-2}{*}{\cellcolor[HTML]{E4E4E4}+ SALT}} & $(0.78)$  & \multirow{-2}{*}{\cellcolor[HTML]{E4E4E4}$+1.03$} & $(0.15)$ & \multirow{-2}{*}{\cellcolor[HTML]{E4E4E4}$+1.11$} \\ \hline
			\bottomrule
		\end{tabular}
		}
		\caption{Performance enhancement in mean (\%) and standard deviation (\%) on corrupted images by combining SALT with existing FL for fair optimization.}
		\label{tab:Combining}
	\end{table}

	\subsection{Robustness to Different Imbalanced Ratios}
	
	Our experiments also demonstrate the robustness of FedISM, showcasing stable performance enhancements across varying ratios of clean and corrupted clients. Quantitative results under various ratios of clients with corrupted images through \{0.1, 0.2, 0.3\} are illustrated in Fig. \ref{robustness}. Across all ratios and on both image distributions, FedISM consistently outperforms other methods in the two evaluation metrics. Such findings further highlight FedISM's strong adaptability and effectiveness irrespective of the corruption ratio.

	\begin{figure}[!t] 
		\centering
		\includegraphics[width=1.0\columnwidth]{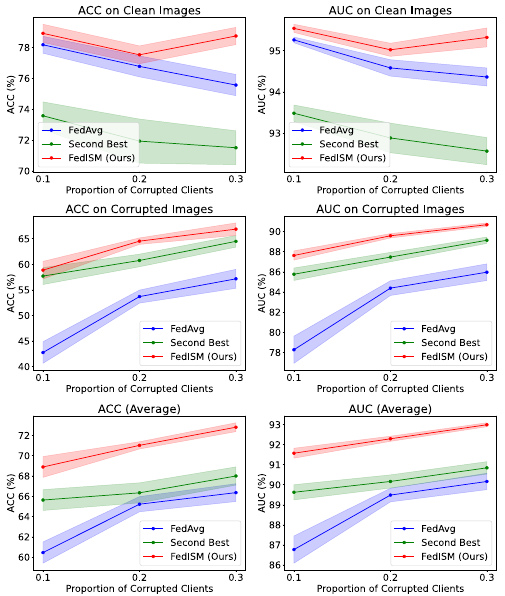}
		\caption{Evaluation across diverse ratios of clean and corrupted clients. Solid lines denote the mean values, while the transparent areas depict the standard deviations. Second best refers to the second best fair FL methods from Section \ref{sec:sota}, specifically in terms of their performance on corrupted images.}
		\label{robustness}
	\end{figure}

	\subsection{Discussion on Parameters}
	
	The parameter $q$ is closely related to how strongly FedISM concentrates on clients with higher sharpness levels. We examine this impact by varying $q$ through \{0.1, 0.5, 1.0, 2.0, 5.0, 10.0\} and assess the performance on the ICH dataset with 20\% corrupted clients, as shown in Fig. \ref{fig:q}. For comparison, we also report the performance of FedAvg and the second-best fair FL method. As $q$ increases, FedISM increasingly focuses on clients with greater sharpness (typically those with corrupted images), which results in a decrease in performance on clean images and an improvement on corrupted images. It is important to note that using very high values of $q$, such as 5 and 10, would slightly degrade the performance on corrupted images, likely due to training instability. Overall, FedISM demonstrates improved performance across a wide range of $q$ values, reducing the burden of parameter tuning in practice.
	
	\begin{figure}[!t] 
		\centering
		\includegraphics[width=1.0\columnwidth]{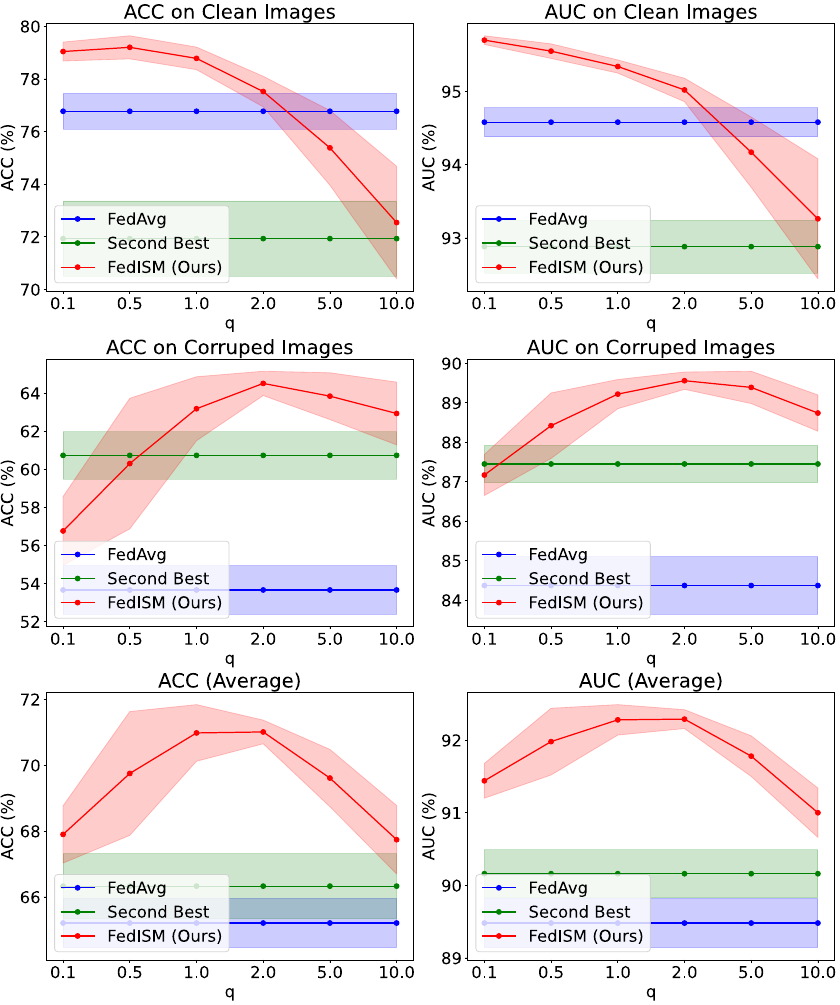}
		\caption{Evaluation on different $q$. Solid lines denote the mean values, while the transparent areas depict the standard deviations. Second best refers to the second best fair FL methods from Section \ref{sec:sota}, specifically in terms of their performance on corrupted images.}
		\label{fig:q}
	\end{figure}

	\section{Conclusion}
	In this paper, we pioneer the identification and formulation of a new fairness challenge in FL, specifically concerning imaging quality shifts across clients. To address this problem, existing FL approaches have primarily concentrated on balancing the empirical risk among distinct client distributions. Despite their effectiveness for fair optimization, they often neglect the crucial aspect of fair generalization. To address this overlooked area, we introduce Federated Learning with Inter-client Sharpness Matching (FedISM). FedISM innovatively refines both local training and global aggregation by integrating sharpness-awareness, effectively harmonizing sharpness levels across clients to achieve fair generalization in FL. Extensive empirical evaluations, on the well-recognized RSNA ICH and ISIC 2019 datasets, clearly demonstrate FedISM's superiority over current state-of-the-art FL methods in terms of promoting fairness. It highlights the effectiveness of FedISM in resolving fairness issues stemming from imaging quality shifts across medical datasets. We are confident that our proposed challenge and solution will pave the way for more equitable and effective FL system designs in medical applications and beyond.

	\clearpage
	
	\section*{Acknowledgments}
	This work was supported in part by the National Natural Science Foundation of China under Grants 62202179 and Grant 62271220, and in part by the Natural Science Foundation of Hubei Province of China under Grant 2022CFB585. The computation is supported by the HPC Platform of HUST.
	
	\bibliographystyle{named}
	\bibliography{ijcai24}

	\clearpage
	\appendix
	\part*{\centering Appendix}
	
	This appendix is organized as follows:
	\begin{itemize}
		
		\item In Section \ref{sec:appendix_notation}, we summarize the mathematical notations.
		
		\item In Section \ref{sec:appendix_theoretical_proof}, the theoretical proof regarding the theorem in this paper is given.
		
		\item In Section \ref{sec:appendix_experiments}, experimental details and additional results are presented.
	\end{itemize}

	\section{Notation Table} \label{sec:appendix_notation}
	
	\begin{table}[h]
		\centering
		\resizebox{1.0\columnwidth}{!}{
			\begin{tabular}{c|l}
				\toprule
				\hline
				Notations                         & Description                                                           \\ \hline
				$C$                               & Class Number                                                          \\
				$K$                               & Client Number                                                         \\
				$D_k$                             & Dataset of $k$-th Client                                              \\
				$\boldsymbol x$                   & Image                                                                 \\
				$y$                               & Label                                                                 \\
				$\mathcal{X}$                     & Image Space                                                           \\
				$\mathcal{Y}$                     & Label Space                                                           \\
				$N_k$                             & Size of the Local Dataset in Client $k$                               \\
				$a_k$                             & Imaging Quality of $k$-th Client                                      \\
				$A$                               & Number of Imaging Quality Categories                                  \\
				$f$                               & Neural Network                                                        \\
				$\ell$                            & Loss Function                                                         \\
				$\boldsymbol \theta$              & Weights of the Neural Network                                              \\
				$\Delta$                          & Probability Simplex                                                   \\
				$\mathbb{R_+}$                    & Set of Positive Real Numbers                                          \\
				$\boldsymbol \lambda$             & Linear Combination Weights of Clients                                 \\
				$\boldsymbol \mu$                 & Linear Combination Weights of Imaging Qualities                       \\
				$\mathds{1}$                     & Indicator Function                                                    \\
				$\mathcal{S}$                     & Sharpness                                                             \\
				$\boldsymbol \epsilon$            & Weights Perturbation                                                  \\
				$\rho$                            & Searching Distance                                                      \\
				$\eta$                            & Learning Rate of Local Training                                                         \\
				$T$                               & Total Communication Rounds                                            \\
				$\boldsymbol w_{\texttt{Avg}}$ & Aggregation Weights for FedAvg                                        \\
				$\widetilde{\boldsymbol w}_{t}$   & Aggregation Weights of FedISM in round $t$                     \\
				$\boldsymbol w_{t}$               & Aggregation Weights of FedISM in round $t$ with Moving Average \\
				$q$                               & Parameter for Global Aggregation                                      \\
				$\beta$                           & Parameter for Moving Average                                          \\ \hline
				\bottomrule
			\end{tabular}
		}
		\caption{Mathematical notations in this paper.}
	\end{table}

	\section{Theoretical Proof} \label{sec:appendix_theoretical_proof}
	
	\setcounter{theorem}{0}
	\setcounter{equation}{1}
	
	In this section, the proof of the theorem is given.
	
	\begin{theorem}[Equivalence] \label{eq:appdendix_Equivalence}
		Assuming class distributions of the testing set and all clients' training sets are identical, we have:
		\begin{equation} \label{eq:appendix_client_fairness}
			\resizebox{0.9\hsize}{!}{$
				\begin{aligned}
					&\boldsymbol \theta^*, \boldsymbol \lambda^* = \arg \mathop{\min}\limits_{\boldsymbol \theta} \mathop{\max}\limits_{\boldsymbol \lambda \in \Delta^{K-1}} \sum_{k=1}^{K} \lambda_k\mathbb{E}_{(\boldsymbol x,y) \sim \mathbb{P}_k(\boldsymbol x, y \mid a_k)} [\ell(f(\boldsymbol x; \boldsymbol \theta), y)], 
				\end{aligned}
				$}
		\end{equation}
		and
		\begin{equation} \label{eq:appendix_group_fairness}
			\resizebox{0.9\hsize}{!}{$
				\begin{aligned}
					&\boldsymbol \theta^*, \boldsymbol \mu^* = \arg \mathop{\min}\limits_{\boldsymbol \theta} \mathop{\max}\limits_{\boldsymbol \mu \in \Delta^{A-1}} \sum_{u=1}^{A} \mu_u \mathbb{E}_{(\boldsymbol x,y) \sim \mathbb{P}(\boldsymbol x, y \mid u)} [\ell(f(\boldsymbol x; \boldsymbol \theta), y)],
				\end{aligned}
				$}
		\end{equation}
		where $\mu^*_t = \sum_{k=1}^{K} \mathds{1}_{a_k=u} \lambda^*_k$.
	\end{theorem}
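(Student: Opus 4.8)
The plan is to collapse both inner maximizations to pointwise maxima over the discrete index sets, show that the two resulting outer objectives coincide as functions of $\boldsymbol\theta$, and then transport the optimal client weights to optimal group weights. Write $L_k(\boldsymbol\theta) := \mathbb{E}_{(\boldsymbol x,y)\sim\mathbb{P}_k(\boldsymbol x,y\mid a_k)}[\ell(f(\boldsymbol x;\boldsymbol\theta),y)]$ for the expected loss of client $k$ and $G_u(\boldsymbol\theta) := \mathbb{E}_{(\boldsymbol x,y)\sim\mathbb{P}(\boldsymbol x,y\mid u)}[\ell(f(\boldsymbol x;\boldsymbol\theta),y)]$ for the expected loss of quality group $u$. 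Both objectives are linear in their weight vectors, so first I would invoke the elementary fact that a linear functional over a probability simplex attains its maximum at a vertex, giving $\max_{\boldsymbol\lambda\in\Delta^{K-1}}\sum_k\lambda_k L_k(\boldsymbol\theta)=\max_{k\in[K]}L_k(\boldsymbol\theta)$ and, identically, $\max_{\boldsymbol\mu\in\Delta^{A-1}}\sum_u\mu_u G_u(\boldsymbol\theta)=\max_{u\in[A]}G_u(\boldsymbol\theta)$.

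Next I would establish the key distributional identity $L_k(\boldsymbol\theta)=G_{a_k}(\boldsymbol\theta)$. Because the attribute $a_k$ only influences image quality, the class-conditional image law $\mathbb{P}(\boldsymbol x\mid y,a_k=u)$ is shared by every client carrying attribute $u$ and coincides with the group law; and because the class distributions of all clients' training sets are assumed identical, $\mathbb{P}_k(y)=\mathbb{P}(y)$ for every $k$. Factoring the joint law as $\mathbb{P}(\boldsymbol x,y\mid u)=\mathbb{P}(\boldsymbol x\mid y,u)\,\mathbb{P}(y)$ then yields $\mathbb{P}_k(\boldsymbol x,y\mid a_k=u)=\mathbb{P}(\boldsymbol x,y\mid u)$, and hence $L_k(\boldsymbol\theta)=G_{a_k}(\boldsymbol\theta)$ for all $k$ and $\boldsymbol\theta$. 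Provided every attribute value in $[A]$ is realized by at least one client, the ranges $\{L_k(\boldsymbol\theta)\}_{k\in[K]}$ and $\{G_u(\boldsymbol\theta)\}_{u\in[A]}$ coincide, so $\max_k L_k(\boldsymbol\theta)=\max_u G_u(\boldsymbol\theta)$ for every $\boldsymbol\theta$, and the two outer objectives share the same set of minimizers $\boldsymbol\theta^*$.

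For the weight correspondence, I would invoke complementary slackness at the saddle point: any optimal $\boldsymbol\lambda^*$ may be taken supported on the worst-performing clients, i.e. those attaining $\max_k L_k(\boldsymbol\theta^*)$. Defining $\mu_u:=\sum_{k=1}^K\mathds{1}_{a_k=u}\lambda^*_k$ gives $\mu_u\ge 0$ and $\sum_u\mu_u=\sum_k\lambda^*_k=1$, so $\boldsymbol\mu\in\Delta^{A-1}$; moreover $\sum_u\mu_u G_u(\boldsymbol\theta^*)=\sum_k\lambda^*_k G_{a_k}(\boldsymbol\theta^*)=\sum_k\lambda^*_k L_k(\boldsymbol\theta^*)$, which equals the common saddle value. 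Since $\boldsymbol\mu$ lies in the simplex and attains the inner maximum at $\boldsymbol\theta^*$, it is an optimal $\boldsymbol\mu^*$, establishing $\mu^*_u=\sum_k\mathds{1}_{a_k=u}\lambda^*_k$.

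The main obstacle I anticipate is the distributional identity $L_k=G_{a_k}$: it is the only place the hypotheses genuinely enter, and it must be argued carefully by decomposing the joint law into a class-conditional image term and a label-marginal term, then checking that the \emph{quality-only} assumption fixes the former across clients while the identical-class-distribution assumption fixes the latter. A secondary subtlety is the weight transfer, which relies on a saddle point existing and on the optimal dual weights being supported on the maximizing indices; I would either assume enough convexity and continuity for a minimax theorem or argue directly that an optimal weight vector may be chosen supported on the arg-max set, together with the implicit assumption that every quality category is represented among the clients.
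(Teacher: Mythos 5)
Your proposal is correct, and it reaches the paper's conclusion through somewhat different mechanics. The paper's proof is a change of variables inside the inner maximization: it rewrites $\sum_k \lambda_k \mathbb{E}_{\mathbb{P}_k(\cdot\mid a_k)}[\ell]$ as $\sum_u \big(\sum_k \mathds{1}_{a_k=u}\lambda_k\big)\mathbb{E}_{\mathbb{P}(\cdot\mid u)}[\ell]$, defines $\mu_u := \sum_k \mathds{1}_{a_k=u}\lambda_k$, and argues that this pushforward map $\Delta^{K-1}\to\Delta^{A-1}$ is surjective because every attribute is held by at least one client (all vertices of $\Delta^{A-1}$ are hit, and linearity gives the rest); the equality of the two minimax problems and the relation $\mu^*_u=\sum_k\mathds{1}_{a_k=u}\lambda^*_k$ then fall out simultaneously. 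You instead collapse both inner maxima to pointwise maxima at simplex vertices, match the two finite value sets via the identity $L_k = G_{a_k}$ (again requiring every attribute to be represented), and then transfer the optimal weights in a separate step. The two routes hinge on the same two ingredients, so they are close cousins; the paper gets the weight correspondence for free from its change of variables, whereas you need the final transfer argument --- though your direct computation $\sum_u\mu_u G_u(\boldsymbol\theta^*)=\sum_k\lambda^*_k L_k(\boldsymbol\theta^*)=\max_u G_u(\boldsymbol\theta^*)$ already settles it, making the complementary-slackness digression unnecessary. What your version buys is rigor at precisely the point where the paper is loose: in its final display the paper silently replaces $\mathbb{P}_k(\boldsymbol x,y\mid u)$ with $\mathbb{P}(\boldsymbol x,y\mid u)$, i.e., it assumes without comment exactly the identity $L_k = G_{a_k}$ that you derive from the quality-only-attribute and identical-class-distribution hypotheses; this is the only place the theorem's stated assumption actually enters, and you are right to identify it as the crux.
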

	
	\begin{proof}[\textbf{Proof}]\renewcommand{\qedsymbol}{}
		Eq. \ref{eq:appendix_client_fairness} can be re-written as:
		\begin{equation} \nonumber
			\begin{aligned}
				&\mathop{\max}\limits_{\boldsymbol \lambda \in \Delta^{K-1}} \sum_{k=1}^{K} \lambda_k\mathbb{E}_{(\boldsymbol x,y) \sim \mathbb{P}_k(\boldsymbol x, y \mid a_k)} [\ell(f(\boldsymbol x; \boldsymbol \theta), y)] \\
				=&\mathop{\max}\limits_{\boldsymbol \lambda \in \Delta^{K-1}} \sum_{k=1}^{K} \lambda_k \sum_{u=1}^{A} \mathds{1}_{a_k=u} \mathbb{E}_{(\boldsymbol x,y) \sim \mathbb{P}_k(\boldsymbol x, y \mid u)} [\ell(f(\boldsymbol x; \boldsymbol \theta), y)]] \\
				=&\mathop{\max}\limits_{\boldsymbol \lambda \in \Delta^{K-1}} \sum_{u=1}^{A} (\sum_{k=1}^{K} \mathds{1}_{a_k=u} \lambda_k) \mathbb{E}_{(\boldsymbol x,y) \sim \mathbb{P}_k(\boldsymbol x, y \mid u)} [\ell(f(\boldsymbol x; \boldsymbol \theta), y)]]
			\end{aligned}
		\end{equation}
		We define $\mu_u := \sum_{k=1}^{K} \mathds{1}_{a_k=u} \lambda_k$ and $\boldsymbol \mu=[\mu_1, \dots, \mu_A]^\top$. For each $u \in [A]$, please note that there is at lease one client $k$ satisfying $\mathds{1}_{a_k=u}=1$. Hence, we can obtain all the standard orthogonal bases of $\mathbb{R}^A$ space by changing the preimage $\boldsymbol \lambda \in \Delta^{K-1}$. This indicates that, for any $\boldsymbol \mu \in \Delta^{A-1}$, it can be acquired by the linear combination of these preimages. Therefore, we have:
		\begin{equation} \nonumber
			\begin{aligned}
				&\mathop{\max}\limits_{\boldsymbol \lambda \in \Delta^{K-1}} \sum_{u=1}^{A} (\sum_{k=1}^{K} \mathds{1}_{a_k=u} \lambda_k) \mathbb{E}_{(\boldsymbol x,y) \sim \mathbb{P}_k(\boldsymbol x, y \mid u)} [\ell(f(\boldsymbol x; \boldsymbol \theta), y)]] \\
				= &\mathop{\max}\limits_{\boldsymbol \mu \in \Delta^{A-1}} \sum_{u=1}^{A} \mu_u \mathbb{E}_{(\boldsymbol x,y) \sim \mathbb{P}(\boldsymbol x, y \mid u)} [\ell(f(\boldsymbol x; \boldsymbol \theta), y)]
			\end{aligned}
		\end{equation}
	\end{proof}
	This completes the proof of the theorem.

	\begin{table*}[!t]
		\centering
		\resizebox{1.0\textwidth}{!}{
			\begin{tabular}{c|cccccccccccc}
				\toprule
				\hline
				\multirow{3}{*}{Method} & \multicolumn{12}{c}{ICH (Corrupted Clients Ratio: 30\%)}                                                                                                                                                                                                                                                        \\ \cline{2-13} 
				& \multicolumn{4}{c|}{Clean}                                                                                 & \multicolumn{4}{c|}{Crorrupted}                                                                            & \multicolumn{4}{c}{Average}                                                           \\ \cline{2-13} 
				& ACC $\uparrow$ & $\Delta$ACC $\uparrow$   & AUC $\uparrow$ & \multicolumn{1}{c|}{$\Delta$AUC $\uparrow$}   & ACC $\uparrow$ & $\Delta$ACC $\uparrow$   & AUC $\uparrow$ & \multicolumn{1}{c|}{$\Delta$AUC $\uparrow$}   & ACC $\uparrow$ & $\Delta$ACC $\uparrow$   & AUC $\uparrow$ & $\Delta$AUC $\uparrow$   \\ \hline
				FedAvg                  & $75.56$        & \multirow{2}{*}{-}       & $94.36$        & \multicolumn{1}{c|}{\multirow{2}{*}{-}}       & $57.14$        & \multirow{2}{*}{-}       & $85.95$        & \multicolumn{1}{c|}{\multirow{2}{*}{-}}       & $66.35$        & \multirow{2}{*}{-}       & $90.16$        & \multirow{2}{*}{-}       \\
				(AISTATS'17)            & $(0.69)$       &                          & $(0.22)$       & \multicolumn{1}{c|}{}                         & $(1.86)$       &                          & $(0.83)$       & \multicolumn{1}{c|}{}                         & $(0.88)$       &                          & $(0.41)$       &                          \\ \hline
				FedAvg                  & $78.57$        & \multirow{2}{*}{$+3.01$} & $\bm{95.63}$   & \multicolumn{1}{c|}{\multirow{2}{*}{$\bm{+1.27}$}} & $62.16$        & \multirow{2}{*}{$+5.02$} & $89.14$        & \multicolumn{1}{c|}{\multirow{2}{*}{$+3.19$}} & $70.37$        & \multirow{2}{*}{$+4.02$} & $92.39$        & \multirow{2}{*}{$+2.23$} \\
				+SALT                   & $(0.50)$       &                          & $\bm{(0.12)}$  & \multicolumn{1}{c|}{}                         & $(0.80)$       &                          & $(0.20)$       & \multicolumn{1}{c|}{}                         & $(0.53)$       &                          & $(0.12)$       &                          \\ \hline
				FedAvg                  & $75.70$        & \multirow{2}{*}{$+0.14$} & $94.08$        & \multicolumn{1}{c|}{\multirow{2}{*}{$-0.28$}} & $62.41$        & \multirow{2}{*}{$+5.27$} & $88.20$        & \multicolumn{1}{c|}{\multirow{2}{*}{$+2.25$}} & $69.06$        & \multirow{2}{*}{$+2.71$} & $91.14$        & \multirow{2}{*}{$+0.98$} \\
				+SAGA                   & $(0.70)$       &                          & $(0.26)$       & \multicolumn{1}{c|}{}                         & $(1.89)$       &                          & $(0.67)$       & \multicolumn{1}{c|}{}                         & $(1.01)$       &                          & $(0.32)$       &                          \\ \hline
				FedISM                  & $\bm{78.74}$        & \multirow{2}{*}{$\bm{+3.18}$}  & $95.32$        & \multicolumn{1}{c|}{\multirow{2}{*}{$+0.96$}} & $\bm{66.86}$   & \multirow{2}{*}{$\bm{+9.72}$} & $\bm{90.66}$   & \multicolumn{1}{c|}{\multirow{2}{*}{$\bm{+4.71}$}} & $\bm{72.80}$   & \multirow{2}{*}{$\bm{+6.45}$} & $\bm{92.99}$   & \multirow{2}{*}{$\bm{+2.83}$} \\
				(Ours)                  & $\bm{(0.56)}$       &                          & $(0.23)$       & \multicolumn{1}{c|}{}                         & $\bm{(1.22)}$  &                          & $\bm{(0.18)}$  & \multicolumn{1}{c|}{}                         & $\bm{(0.44)}$  &                          & $\bm{(0.10)}$  &                          \\ \hline
				\bottomrule
			\end{tabular}
		}
		\caption{Component-wise ablation study in mean (\%) and standard deviation (\%) in ACC and AUC. Values without parentheses represent the mean, while those within parentheses indicate the standard deviation. $\Delta$ACC and $\Delta$AUC represent the difference to ACC and AUC of FedAvg, respectively. The ratio of corrupted clients is 30\%.}
		\label{tab:appendix:ablation1}
	\end{table*}
	
	\begin{table*}[!t]
		\centering
		\resizebox{1.0\textwidth}{!}{
			\begin{tabular}{c|cccccccccccc}
				\toprule
				\hline
				\multirow{3}{*}{Method} & \multicolumn{12}{c}{ICH (Corrupted Clients Ratio: 10\%)}                                                                                                                                                                                                                                                                                       \\ \cline{2-13} 
				& \multicolumn{4}{c|}{Clean}                                                                                           & \multicolumn{4}{c|}{Crorrupted}                                                                                       & \multicolumn{4}{c}{Average}                                                                     \\ \cline{2-13} 
				& ACC $\uparrow$ & $\Delta$ACC $\uparrow$        & AUC $\uparrow$ & \multicolumn{1}{c|}{$\Delta$AUC $\uparrow$}        & ACC $\uparrow$ & $\Delta$ACC $\uparrow$         & AUC $\uparrow$ & \multicolumn{1}{c|}{$\Delta$AUC $\uparrow$}        & ACC $\uparrow$ & $\Delta$ACC $\uparrow$        & AUC $\uparrow$ & $\Delta$AUC $\uparrow$        \\ \hline
				FedAvg                  & $78.17$        & \multirow{2}{*}{-}            & $95.26$        & \multicolumn{1}{c|}{\multirow{2}{*}{-}}            & $42.72$        & \multirow{2}{*}{-}             & $78.26$        & \multicolumn{1}{c|}{\multirow{2}{*}{-}}            & $60.44$        & \multirow{2}{*}{-}            & $86.76$        & \multirow{2}{*}{-}            \\
				(AISTATS'17)            & $(0.55)$       &                               & $(0.07)$       & \multicolumn{1}{c|}{}                              & $(2.12)$       &                                & $(1.35)$       & \multicolumn{1}{c|}{}                              & $(1.05)$       &                               & $(0.68)$       &                               \\ \hline
				FedAvg                  & $\bm{80.04}$   & \multirow{2}{*}{$\bm{+1.90}$} & $\bm{96.00}$   & \multicolumn{1}{c|}{\multirow{2}{*}{$\bm{+0.74}$}} & $46.91$        & \multirow{2}{*}{$+4.19$}       & $83.37$        & \multicolumn{1}{c|}{\multirow{2}{*}{$+5.11$}}      & $63.48$        & \multirow{2}{*}{$+3.04$}      & $89.68$        & \multirow{2}{*}{$+2.92$}      \\
				+SALT                   & $\bm{(0.34)}$  &                               & $\bm{(0.11)}$  & \multicolumn{1}{c|}{}                              & $(1.28)$       &                                & $(0.82)$       & \multicolumn{1}{c|}{}                              & $(0.67)$       &                               & $(0.43)$       &                               \\ \hline
				FedAvg                  & $75.35$        & \multirow{2}{*}{$-2.79$}      & $94.16$        & \multicolumn{1}{c|}{\multirow{2}{*}{$-1.10$}}      & $55.15$        & \multirow{2}{*}{$+12.43$}      & $84.68$        & \multicolumn{1}{c|}{\multirow{2}{*}{$+6.42$}}      & $65.25$        & \multirow{2}{*}{$+4.81$}      & $89.42$        & \multirow{2}{*}{$+2.66$}      \\
				+SAGA                   & $(1.03)$       &                               & $(0.28)$       & \multicolumn{1}{c|}{}                              & $(2.37)$       &                                & $(1.06)$       & \multicolumn{1}{c|}{}                              & $(1.26)$       &                               & $(0.55)$       &                               \\ \hline
				FedISM                  & $78.91$        & \multirow{2}{*}{$+0.77$}      & $95.54$        & \multicolumn{1}{c|}{\multirow{2}{*}{$+0.28$}}      & $\bm{58.84}$   & \multirow{2}{*}{$\bm{+16.12}$} & $\bm{87.61}$   & \multicolumn{1}{c|}{\multirow{2}{*}{$\bm{+9.35}$}} & $\bm{68.88}$   & \multirow{2}{*}{$\bm{+8.44}$} & $\bm{91.57}$   & \multirow{2}{*}{$\bm{+4.81}$} \\
				(Ours)                  & $(0.59)$       &                               & $(0.10)$       & \multicolumn{1}{c|}{}                              & $\bm{(1.72)}$  &                                & $\bm{(0.46)}$  & \multicolumn{1}{c|}{}                              & $\bm{(1.03)}$  &                               & $\bm{(0.25)}$  &                               \\ \hline
				\bottomrule
			\end{tabular}
		}
		\caption{Component-wise ablation study in mean (\%) and standard deviation (\%) in ACC and AUC. Values without parentheses represent the mean, while those within parentheses indicate the standard deviation. $\Delta$ACC and $\Delta$AUC represent the difference to ACC and AUC of FedAvg, respectively. The ratio of corrupted clients is 10\%.}
		\label{tab:appendix:ablation2}
	\end{table*}

	\section{Experiments} \label{sec:appendix_experiments}
	\subsection{Details of SOTA Methods for Comparison}
	This section provides an overview and implementation approaches of the methods compared in Section 4.2.
	
	\begin{itemize}
		\item Agnostic-FL \cite{mohri2019agnostic}: Agnostic-FL, a pioneering work in fair FL, introduces a training strategy that focuses on updating only the poorest-performing client. This approach aims to enhance fairness, albeit with potential drawbacks such as slower convergence and model instability.
		
		\item q-FedAvg \cite{li2019fair}: q-FedAvg addresses fairness issues by incorporating training loss into the global update process. To improve model convergence and maintain consistency with other methods, we have adjusted the initial global update method by introducing a multiplicative constant. This adjustment aligns it with FedAvg, employing loss-aware aggregation weights. For parameter optimization, we experimented with different values of the parameter $q$ in q-FedAvg, specifically \{0.5, 1.0, 2.0, 5.0\}, and report the configuration yielding the best performance.
		
		\item FairFed \cite{ezzeldin2023fairfed}: FairFed aims at achieving group fairness through equitable optimization across clients. The parameter $\beta$ in FairFed is tuned from \{0.1, 0.5, 1.0\}.
		
		\item FedCE \cite{jiang2023fair}: FedCE focuses on promoting fair FL specifically in the context of medical image segmentation. For the purposes of this study, we have adapted it for image classification tasks.
		
		\item FedGA \cite{zhang2023federated}: This method is designed for better domain generalization by pursuing fairness across different clients/domains.
	\end{itemize}

	\subsection{Additional Experimental Results}
	\subsubsection{Ablation Study}
	Component-wise ablation studies are conducted in two additional settings with varying ratios of clean and corrupted clients. Specifically, the ratio of corrupted clients is altered to 30\% and 10\%, while keeping the corruption type the same as in Tab. \ref{tab:Ablation}. The results are summarized in Tab. \ref{tab:appendix:ablation1} and Tab. \ref{tab:appendix:ablation2}. Both experiments consistently indicate the same conclusion as the main text. SALT enhances generalization but does not significantly improve fairness. SAGA solely balances importance weights without directly addressing sharpness minimization during training. The optimal performance is achieved by combining these two components.
	
	\subsubsection{Discussion on Parameter $\beta$}
	To ensure training stability, we employ a moving average approach in our FedISM as follows:
	\begin{equation} \nonumber
		{\boldsymbol w}_{t} = \beta \widetilde{\boldsymbol w}_{t} + (1-\beta) {\boldsymbol w}_{t-1}.
	\end{equation}
	In this study, we explore the impact of the parameter $\beta$. We vary $\beta$ through the values \{0.3, 0.5, 0.7, 0.9, 1.0\} and conduct experiments on the ICH dataset, using the approach described in Sec. \ref{sec:sota}. The results of these experiments are depicted in Fig. \ref{fig:appendix_beta}. Additionally, we include the performance metrics of FedAvg and the second-best method for enhanced comparison. Notably, compared to the scenario where moving average is not applied (\textit{i.e.}, $\beta=1.0$), the incorporation of moving average yields improved and more consistent performance, typically characterized by a reduced standard deviation. Moreover, we observe that the performance is relatively insensitive to different values of $\beta$ when moving average is integrated.
	
	\begin{figure*}[!t] 
		\centering
		\includegraphics[width=1.0\textwidth]{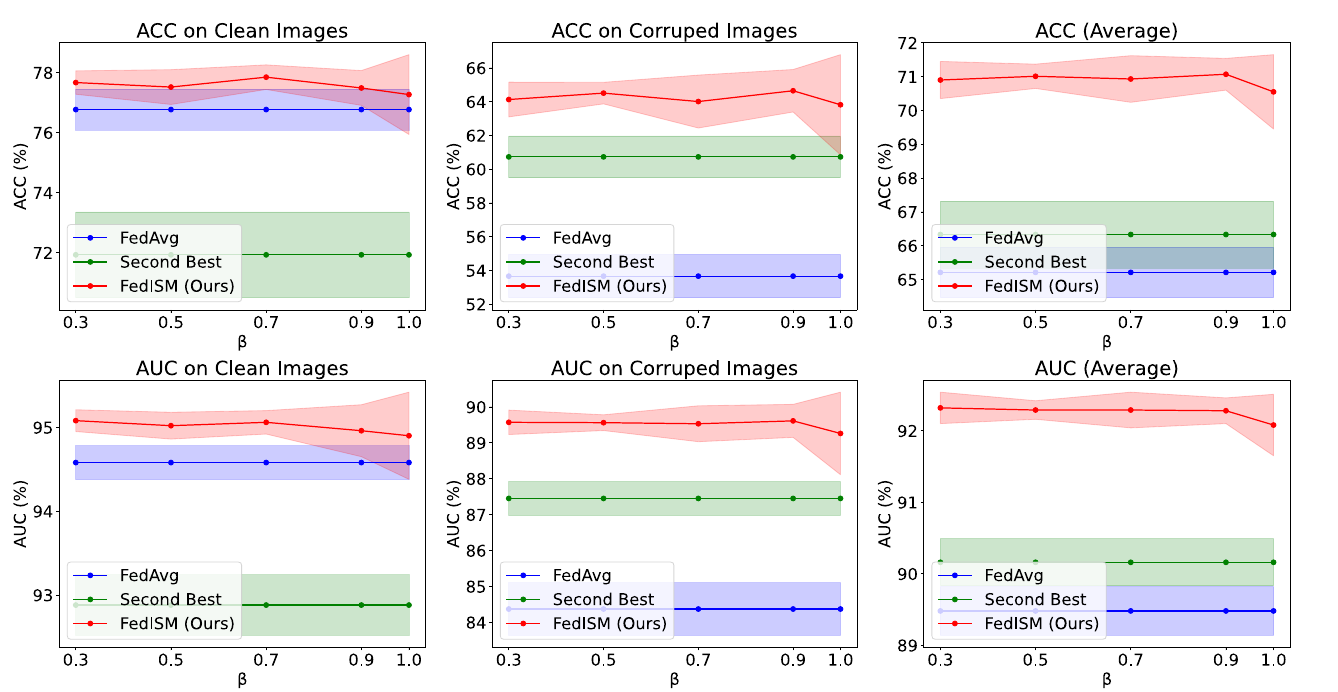}
		\caption{Evaluation on different $\beta$. Solid lines denote the mean values, while the transparent areas depict the standard deviations. Second best refers to the second best fair FL methods from Section \ref{sec:sota}, specifically in terms of their performance on corrupted images.}
		\label{fig:appendix_beta}
	\end{figure*}

	\subsubsection{Sharpness across Clients}
	We present the statistical information (\textit{i.e.}, mean and stand deviation) of clients' sharpness as shown in Tab. \ref{tab:appendix_sharpness_distribution}. Our proposed method, FedISM, demonstrates a more uniform and lower sharpness across clients compared to other methods. This aligns well with our conceptual framework of inter-client sharpness matching (\textit{i.e.}, ensuring consistent and low sharpness levels across different clients).

	\begin{table*}[!t]
		\centering
		\begin{tabular}{l|ccccccc}
			\toprule
			\hline
			Method & \multicolumn{1}{l}{FedAvg} & \multicolumn{1}{l}{Agnostic-FL} & \multicolumn{1}{l}{q-FedAvg} & \multicolumn{1}{l}{FairFed} & \multicolumn{1}{l}{FedCE} & \multicolumn{1}{l}{FedGA} & \multicolumn{1}{l}{FedISM (Ours)} \\ \hline
			Mean $\downarrow$  & $0.67370$                  & $1.01556$                       & $0.63291$                    & $1.07346$                   & $0.66945$                 & $1.06669$                 & $\bm{0.44165}$                         \\
			Std $\downarrow$    & $0.56693$                  & $0.65518$                       & $0.16827$                    & $0.22205$                   & $0.29507$                 & $0.28508$                 & $\bm{0.09966}$                         \\ \hline
			\bottomrule
		\end{tabular}
		\caption{Mean and standard deviation of sharpness across clients.}
		\label{tab:appendix_sharpness_distribution}
	\end{table*}

	\subsubsection{Visualization of Loss landscape}
	The loss landscape under model weight perturbation is visualized as Fig. \ref{fig:appendix_loss_landscape} following [Li \textit{et al.}, 2018], which effectively captures the dynamics of loss variation in the vicinity of the model's convergence point. Traditional approaches for fair optimization primarily aim to reduce training loss across various clients, often neglecting the geometric characteristics of the loss surface. This can result in convergence at sharper minima, as illustrated by the two training columns in the figure. However, our proposed method, FedISM, diverges from this norm. By targeting uniformly low sharpness across both clean and corrupted image clients, FedISM consistently achieves convergence at flatter minima. This strategy fosters superior generalization, as evidenced by the comparatively lower testing losses (see two columns of testing).
	
	\begin{figure*}[!t] 
		\centering
		\includegraphics[width=1.0\textwidth]{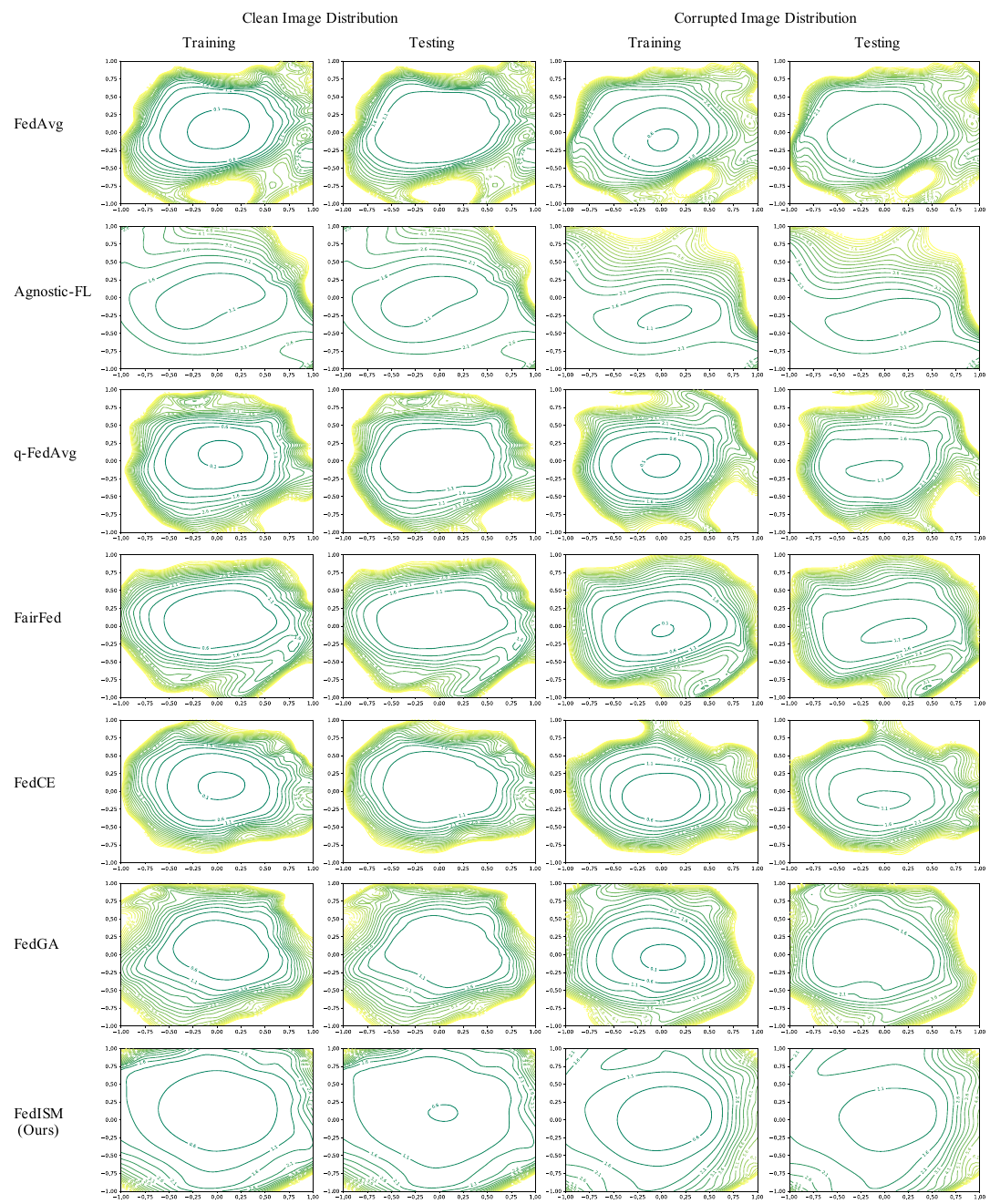}
		\caption{Loss landscape of different methods on training and testing sets of two image distributions. The origin (\textit{i.e.}, $(x=0.0, y=0.0)$) represents the initial model, while other points mean models that have been perturbed along two axes (i.e., $x$ and $y$).}
		\label{fig:appendix_loss_landscape}
	\end{figure*}

	\subsubsection{Evaluation with Different Corruption Type}
	Different experiments in Sec. \ref{sec:sota} that model corruption as Gaussian noise, we additionally generate corrupted images by random motion blur \cite{hendrycks2019benchmarking}. This is also a typical corruption in imaging caused by random movements from either patients or cameras. Following Sec. \ref{sec:sota}, the ratio of corrupted clients is set as 20\%. We summarize the results as Tab. \ref{tab:appendix_SOTA_motionBlur}, which can see the consistent superiority of our method. Here, the parameter of GSAM \cite{zhuang2022surrogate} is tuned to enhance absolute classification performance, indicating the potential impact of sharpness calculation in our method. This may inspire the next steps in our research on sharpness definition, which may further enhance our method.
	
	\begin{table*}[!t]
		\centering
		\resizebox{1.0\textwidth}{!}{
			\begin{tabular}{c|c|cccccccccccc}
				\toprule
				\hline
				\multirow{4}{*}{Category} & \multirow{4}{*}{Method} & \multicolumn{12}{c}{Dataset (Corrupted Clients Ratio: 20\%)} \\ \cline{3-14} 
				& & \multicolumn{6}{c|}{ICH} & \multicolumn{6}{c}{ISIC 2019} \\ \cline{3-14} 
				& & \multicolumn{2}{c}{Clean} & \multicolumn{2}{c}{Corrupted} & \multicolumn{2}{c|}{Average} & \multicolumn{2}{c}{Clean} & \multicolumn{2}{c}{Corrupted} & \multicolumn{2}{c}{Average} \\ \cline{3-14} 
				& & ACC $\uparrow$ & AUC $\uparrow$ & ACC $\uparrow$ & AUC $\uparrow$ & ACC $\uparrow$ & \multicolumn{1}{c|}{AUC $\uparrow$} & ACC $\uparrow$ & AUC $\uparrow$ & ACC $\uparrow$ & AUC $\uparrow$ & ACC $\uparrow$ & AUC $\uparrow$ \\ \hline
				\multirow{2}{*}{Naive FL} & FedAvg & $75.97$ & $94.74$ & $68.69$ & $90.98$ & $72.33$ & \multicolumn{1}{c|}{$92.86$} & $67.16$ & $92.60$ & $54.98$ & $88.16$ & $61.07$ & $90.38$ \\
				& (AISTATS'17) & $(1.11)$ & $(0.22)$ & $(0.74)$ & $(0.41)$ & $(0.81)$ & \multicolumn{1}{c|}{$(0.29)$} & $(1.15)$ & $(0.37)$ & $(1.72)$ & $(0.54)$ & $(1.31)$ & $(0.41)$ \\ \hline
				\multirow{10}{*}{Fair FL} & Agnostic-FL & $63.57$ & $89.45$ & $57.27$ & $86.35$ & $60.42$ & \multicolumn{1}{c|}{$87.90$} & $51.24$ & $87.90$ & $50.11$ & $86.56$ & $50.67$ & $87.23$ \\
				& (ICML'19) & $(3.48)$ & $(1.32)$ & $(7.12)$ & $(3.32)$ & $(3.27)$ & \multicolumn{1}{c|}{$(1.78)$} & $(5.49)$ & $(1.91)$ & $(3.85)$ & $(1.64)$ & $(2.93)$ & $(1.18)$ \\ \cline{2-14} 
				& q-FedAvg & $75.68$ & $94.77$ & $70.53$ & $91.87$ & $73.10$ & \multicolumn{1}{c|}{$93.32$} & $71.39$ & $93.88$ & $60.79$ & $90.03$ & $66.09$ & $91.95$ \\
				& (ICLR'20) & $(1.59)$ & $(0.34)$ & $(1.18)$ & $(0.53)$ & $(0.54)$ & \multicolumn{1}{c|}{$(0.17)$} & $(0.73)$ & $(0.30)$ & $(1.38)$ & $(0.39)$ & $(0.96)$ & $(0.30)$ \\ \cline{2-14} 
				& FairFed & $74.59$ & $94.34$ & $69.66$ & $91.75$ & $72.12$ & \multicolumn{1}{c|}{$93.05$} & $63.94$ & $91.89$ & $56.27$ & $89.58$ & $60.11$ & $90.73$ \\
				& (AAAI'23) & $(1.12)$ & $(0.21)$ & $(1.08)$ & $(0.30)$ & $(0.66)$ & \multicolumn{1}{c|}{$(0.18)$} & $(1.58)$ & $(0.48)$ & $(1.59)$ & $(0.58)$ & $(1.48)$ & $(0.49)$ \\ \cline{2-14}
				& FedCE & $76.83$ & $95.00$ & $70.38$ & $92.04$ & $73.60$ & \multicolumn{1}{c|}{$93.52$} & $72.29$ & $94.26$ & $61.85$ & $90.95$ & $67.07$ & $92.61$ \\
				& (CVPR'23) & $(0.83)$ & $(0.15)$ & $(0.72)$ & $(0.28)$ & $(0.73)$ & \multicolumn{1}{c|}{$(0.17)$} & $(0.64)$ & $(0.28)$ & $(1.28)$ & $(0.40)$ & $(0.74)$ & $(0.31)$ \\ \cline{2-14}
				& FedGA & $72.97$ & $93.75$ & $69.64$ & $91.78$ & $71.31$ & \multicolumn{1}{c|}{$92.76$} & $66.35$ & $92.78$ & $59.22$ & $90.20$ & $62.79$ & $91.49$ \\
				& (CVPR'23) & $(0.78)$ & $(0.16)$ & $(0.60)$ & $(0.27)$ & $(0.54)$ & \multicolumn{1}{c|}{$(0.16)$} & $(1.53)$ & $(0.45)$ & $(1.46)$ & $(0.36)$ & $(1.31)$ & $(0.34)$ \\ \hline
				\multirow{2}{*}{Ours} & \multirow{2}{*}{FedISM} & $\bm{77.86}$ & $\bm{95.60}$ & $\bm{72.38}$ & $\bm{93.59}$ & $\bm{75.12}$ & \multicolumn{1}{c|}{$\bm{94.60}$} & $\bm{73.44}$ & $\bm{95.07}$ & $\bm{63.45}$ & $\bm{92.21}$ & $\bm{68.45}$ & $\bm{93.64}$ \\
				& & $\bm{(0.67)}$ & $\bm{(0.07)}$ & $\bm{(0.75)}$ & $\bm{(0.18)}$ & $\bm{(0.53)}$ & \multicolumn{1}{c|}{$\bm{(0.08)}$} & $\bm{(0.88)}$ & $\bm{(0.18)}$ & $\bm{(0.75)}$ & $\bm{(0.29)}$ & $\bm{(0.68)}$ & $\bm{(0.16)}$ \\ \hline
				\bottomrule
			\end{tabular}
		}
		\caption{Comparison of the mean (\%) and standard deviation (\%) for ACC and AUC against state-of-the-art methods. Values without parentheses represent the mean, while those within parentheses indicate the standard deviation. The type of corruption is motion blur.}
		\label{tab:appendix_SOTA_motionBlur}
	\end{table*}
	
	\section*{Additional References for Appendix}
	
	[Li \textit{et al.}, 2018] Hao Li, Zheng Xu, Gavin Taylor, Christoph Studer and Tom Goldstein. Visualizing the Loss Landscape of Neural Nets. In \textit{NeurIPS}, 2018.

\end{document}